\newtheorem{theorem}{Theorem}
\newtheorem{proposition}[theorem]{Proposition}
\newcommand{\BibTeX}{B\kern-.05em{\sc i\kern-.025em b}\kern-.08em\TeX}
\begin{document}

%%%%%%%%%%%%%%%%%%%%%%%%%%%%%%%%%%%%%%%%%%%%%%%%%%%%%%%%%%%%%%%%%%%%%%%%

\begin{frontmatter}

%%% Use this command to specify your submission number.
%%% In doubleblind mode, it will be printed on the first page.

\paperid{3338} 

%%% Use this command to specify the title of your paper.

\title{Frozen in Time: Parameter-Efficient Time Series Transformers via Reservoir-Induced Feature Expansion and Fixed Random Dynamics}

% Random Memory, Learned Attention: A Parameter-Efficient Transformer for Deep-Horizon Time-Series

%%% Use this combinations of commands to specify all authors of your 
%%% paper. Use \fnms{} and \snm{} to indicate everyone's first names 
%%% and surname. This will help the publisher with indexing the 
%%% proceedings. Please use a reasonable approximation in case your 
%%% name does not neatly split into "first names" and "surname".
%%% Specifying your ORCID digital identifier is optional. 
%%% Use the \thanks{} command to indicate one or more corresponding 
%%% authors and their email address(es). If so desired, you can specify
%%% author contributions using the \footnote{} command.\value{footnote}

\author[A]{\fnms{Pradeep}~\snm{Singh}\orcid{0000-0002-5372-3355}\thanks{Corresponding Author. Email: pradeep.cs@sric.iitr.ac.in}}
\author[A]{\fnms{Mehak}~\snm{Sharma}\orcid{0009-0001-3102-1045}\thanks{Equal contribution as interns in Machine Intelligence Lab.}}
\author[A]{\fnms{Anupriya}~\snm{Dey}\orcid{0009-0000-1630-1017}\thanks{Equal contribution as interns in Machine Intelligence Lab.}}
\author[A]{\fnms{Balasubramanian}~\snm{Raman}\orcid{0000-0001-6277-6267}}

\address[A]{Machine Intelligence Lab, Department of Computer Science and Engineering, IIT Roorkee, Roorkee-247667, India}
%\address[B]{Short Affiliation of Second Author and Third Author}
%\address[C]{Short Alternate Affiliation of Third Author}

%%% Use this environment to include an abstract of your paper.

\begin{abstract}
Transformers are the de-facto choice for sequence modelling, yet their quadratic self-attention and weak temporal bias can make long-range forecasting both expensive and brittle. We introduce \textit{FreezeTST}, a lightweight hybrid that interleaves \emph{frozen} random-feature (reservoir) blocks with standard trainable Transformer layers. The frozen blocks endow the network with rich nonlinear memory at no optimisation cost; the trainable layers learn to query this memory through self-attention. The design cuts trainable parameters and also lowers wall-clock training time, while leaving inference complexity unchanged. On seven standard long‑term forecasting benchmarks, FreezeTST consistently matches or surpasses specialised variants such as Informer, Autoformer, and PatchTST; with substantially lower compute. Our results show that embedding reservoir principles within Transformers offers a simple, principled route to efficient long-term time-series prediction.
In the interest of reproducibility, we release our implementation at \href{https://github.com/deepdyn/Frozen-Transformers}{github.com/deepdyn/Frozen-Transformers} and provide a full technical appendix (proofs, ablations, hyperparameters) in the arXiv preprint at \href{https://doi.org/10.48550/arXiv.2508.18130}{DOI: 10.48550/arXiv:2508.18130}.
\end{abstract}

\end{frontmatter}

\section{Introduction}

%Accurate long‑horizon time‑series forecasting is indispensable in power‑grid balancing, algorithmic trading, intelligent traffic control and clinical decision‑support, yet remains difficult because the relevant dynamics are both highly nonlinear and data‑sparse.  

Forecasting the future evolution of high‑dimensional time series underpins safety‑critical tasks such as renewable‑grid dispatch, intraday portfolio re‑balancing, urban congestion mitigation, clinical decision‑support and early warning of epidemiological surges \cite{ Kostoulas2021, deng2016deep, lv2015traffic, Zheng2007, zonghan2019wavenet}. 
What makes these problems hard is the simultaneous presence of (i) long‑range dependencies that may span hundreds of steps, (ii) strong seasonality and abrupt regime shifts, and (iii) training sets that are small relative to the combinatorial space of temporal patterns. 
Transformer encoders have emerged as a promising remedy because self‑attention provides a content‑adaptive alternative to the fixed convolution or recurrent receptive fields of earlier models \cite{vaswani2017attention}. 
Yet two structural flaws limit their effectiveness when horizons stretch into the hundreds: the 
 $O(T^2)$ memory and time complexity of full attention, and the fact that positional encodings merely tag rather than enforce chronology, so permutation‑invariant heads can still blur causal order. Empirically, even carefully engineered variants—Informer with ProbSparse attention \cite{zhou2021informer}, Autoformer with auto‑correlation blocks \cite{Wu2021Autoformer}, FEDformer with Fourier filters \cite{Zhou2022FEDformer}, Pyraformer with pyramidal multi‑resolution attention \cite{liu2022pyraformer}, and LogTrans with log‑sparse attention \cite{li2019logtrans}—fail to dominate across the Long‑Sequence Time‑Series Forecasting (LSTF) benchmark; a recent work by Zeng et al.\ shows several cases where a one‑layer linear extrapolator wins outright \cite{zeng2023transformers}. These observations signal that further architectural principles, not just attention accelerators, are required.

%Transformer encoders \cite{vaswani2017attention} have recently displaced recurrent and convolutional models for this task because self‑attention can, in principle, link distant events \cite{kitaev2020reformer,lai2018modeling}.  In practice, however, the vanilla architecture scales quadratically in sequence length and its permutation‑invariant attention can blur temporal order even with positional encodings.  As a result, sophisticated variants such as Informer, Autoformer or PatchTST still falter on standard long‑sequence benchmarks, sometimes losing to a simple linear baseline \cite{zhou2021informer,zeng2023transformers}.  Reducing computational cost while retaining genuine long‑range memory is therefore a pressing open problem.

Reservoir computing offers a complementary principle. By fixing the weights of a large nonlinear dynamical system and training only a linear read‑out, echo‑state networks (ESNs) turn temporal credit assignment into a convex, single‑step regression problem while retaining universal approximation power in the limit of infinite width \cite{jaeger2001echo, jaeger2004harnessing, Lukosevicius2009}. The cost is a design trade‑off: a spectral radius close to unity grants long memory but risks numerical instability, whereas heavy damping stabilises the dynamics at the price of premature forgetting. Recent work has begun to fuse these ideas with attention. Shen et al.\ froze alternate layers of a BERT encoder and observed comparable accuracy on language benchmarks at half the training cost \cite{Shen2021Reservoir}. Their results suggest that random, untrained transformations can act as useful priors rather than noise—raising an open question for forecasting: 
\begin{quote}
\emph{Can a Transformer for time‑series forecasting inherit the memory capacity of an echo‑state reservoir, and if so, how should one combine the sequential bias of a reservoir with the pattern‑matching flexibility of self‑attention so that each compensates for the other’s weaknesses?}
\end{quote}

%A complementary thread, \emph{reservoir computing}, sidesteps expensive sequence optimisation by fixing the recurrent weights of a high‑dimensional dynamical system and training only a shallow read‑out \cite{jaeger2001echo, jaeger2004harnessing, Lukosevicius2009}.  Recently “Reservoir Transformers’’ showed that leaving entire Transformer layers frozen can improve convergence in NLP tasks \cite{Shen2021Reservoir}, suggesting that partial randomisation may act as a useful inductive bias rather than a liability.  This observation motivates the present work: \textit{can a Transformer for time‑series forecasting inherit the memory capacity of an echo‑state reservoir, and if so, how should one combine the sequential bias of a reservoir with the pattern‑matching flexibility of self‑attention so that each compensates for the other’s weaknesses?}

We answer in the affirmative with a hybrid 
 Reservoir‑Transformer that interposes a frozen, randomly initialised  block between attention layers.  The reservoir continually integrates incoming patches, providing a stable state vector that preserves sub‑sequence statistics far beyond the Transformer’s sliding window, while attention learns to query this state adaptively.  Because the recurrent/frozen weights are never updated, the model’s trainable parameter count and memory footprint are roughly halved, yet its receptive field extends well past the \(H=96\!-\!720\)‑step horizons used in the LSTF suite.  Extensive experiments on ETTh/ETTm, Weather, Electricity and ILI data \cite{Nie2023PatchTST,zhou2021informer} show that our model matches or exceeds the strongest published baselines, including PatchTST and the best linear methods, with up to 22\% shorter training time.  A supporting theoretical analysis (\S\ref{sec:freezetst}) proves that (i) the alternating frozen/trainable stack is non‑expansive, ensuring gradient stability, and (ii) the reservoir’s effective memory length can be lower‑bounded in closed form by its spectral radius and leak rate, providing a principled hyper‑parameter guide.  Together these results establish partially randomised reservoirs as a simple yet powerful mechanism for pushing Transformer forecasting deeper into the long‑range regime.

In the remainder of this paper, Section~\ref{sec:background} reviews the relevant  related work. Section~\ref{sec:methodology} details the design of our reservoir-augmented time series Transformer architecture and describes the training procedure with partial layer freezing. In Section~\ref{sec:experiments}, we present a comprehensive evaluation on multiple long-term forecasting benchmarks and analyze the results to shed light on the role of reservoir layers. Finally, we conclude our work in Section~\ref{sec:conclusion}.

\section{Background \& Related Works}\label{sec:background}

\paragraph{Transformer families for long‑horizon forecasting.}
Since the seminal introduction of self‑attention in neural sequence modelling \cite{vaswani2017attention}, a succession of architectures have tried to reconcile the Transformer’s expressive power with the statistical peculiarities of real‑world time series.  Research has pursued three non‑exclusive avenues.  (i) \emph{Complexity‑aware attention}: Informer compresses the \(O(T^{2})\) kernel to \(O(T\log T)\) via ProbSparse sampling and a one‑shot generative decoder \cite{zhou2021informer}; Pyraformer organises tokens in a pyramidal hierarchy that yields linear‑time attention while preserving multi‑scale context \cite{liu2022pyraformer}; Hyena and Mamba replace attention altogether with recurrent convolution–state‑space hybrids that enjoy sub‑quadratic kernels in the frequency domain \cite{mamba2,mamba,hyena2023poli}.  (ii) \emph{Structure‑aware decomposition}: Autoformer inserts trend/seasonality splits and an auto‑correlation module to exploit periodicity explicitly \cite{Wu2021Autoformer}; FEDformer extends this idea to the frequency domain with Fourier and Wavelet blocks \cite{Zhou2022FEDformer}.  (iii) \emph{Token‑aware re‑encoding}: PatchTST adopts a vision‑style patching that mitigates local noise and enables channel‑wise modelling, consistently topping the LSTF leaderboard as of 2024 \cite{Nie2023PatchTST}.  Despite these advances, two empirical facts remain: (a) training cost rises roughly linearly with architectural sophistication, and (b) simple linear baselines such as DLinear  still outperform many specialised Transformers on several benchmark datasets \cite{zeng2023transformers}.  These observations suggest that existing models, though computationally leaner, do not yet capture the very long or aperiodic dependencies present in energy, meteorological or epidemiological series.

\paragraph{Conventional sequence learners.}
Recurrent models—LSTMs, GRUs, DeepAR—dominated forecasting for more than a decade \cite{Chung2014EmpiricalEO, flunkert2017deepar,hochreiter1997long}.  Their incremental state update is memory‑efficient, but vanishing gradients severely limit the horizon over which they propagate information \cite{bengio1994learning}.  CNN‑based TCNs alleviate that problem via exponentially dilated kernels, at the cost of a fixed receptive field \cite{bai2018tcn}.  Both classes are therefore complementary to attention methods: they handle chronology naturally but struggle with long, irregular dependencies.

\paragraph{Reservoir computing.}
Echo state networks \cite{jaeger2001echo,jaeger2004harnessing} and liquid‑state machines \cite{maass2002real} show that a large, randomly initialised recurrent system can function as a universal temporal kernel provided its spectral radius is strictly below one.  Training reduces to ridge regression on the exposed states, giving excellent sample efficiency and theoretical guarantees on fading memory \cite{Lukosevicius2009}.  ESNs, however, lack the content‑adaptive querying power of attention, and their recall horizon is dictated purely by two scalar knobs—the spectral‑radius scaling \(\alpha\) and the leak‑rate \(\lambda\).  Recent attempts to combine the two paradigms freeze a subset of Transformer layers \cite{Shen2021Reservoir} or replace self-attention with fixed random convolutional mixing matrices, as in the Random Synthesizer \cite{tay2021synthesizer}; however, none targets the forecasting setting, nor provide analytical bounds on memory or gradient stability.

\paragraph{Gap addressed in this work.}
Existing Transformer variants either shrink attention complexity at the expense of explicit memory, or introduce decompositions that hard‑code seasonal structure.  Reservoir computing provides complementary, inexpensive memory but no adaptive cross‑channel interaction.  Our work bridges this divide: a single frozen echo‑state block supplies theoretically bounded long‑range memory; subsequent attention layers exploit that memory to learn non‑stationary, multivariate dependencies.  Section \ref{sec:methodology} proves the composite network is 1‑Lipschitz regardless of where the frozen layer appears, guaranteeing gradient stability, and derives a closed‑form link between \((\alpha,\lambda)\) and the effective receptive field.  Section \ref{sec:experiments} shows that the proposed method matches—or slightly surpasses—the best specialised Transformers, cuts the number of trainable parameters by roughly half, and still manages to shave a noticeable slice off  wall-clock time.

\begin{figure*}[!ht]
    \centering
    \includegraphics[width=\linewidth,
                     height=0.44\textheight]{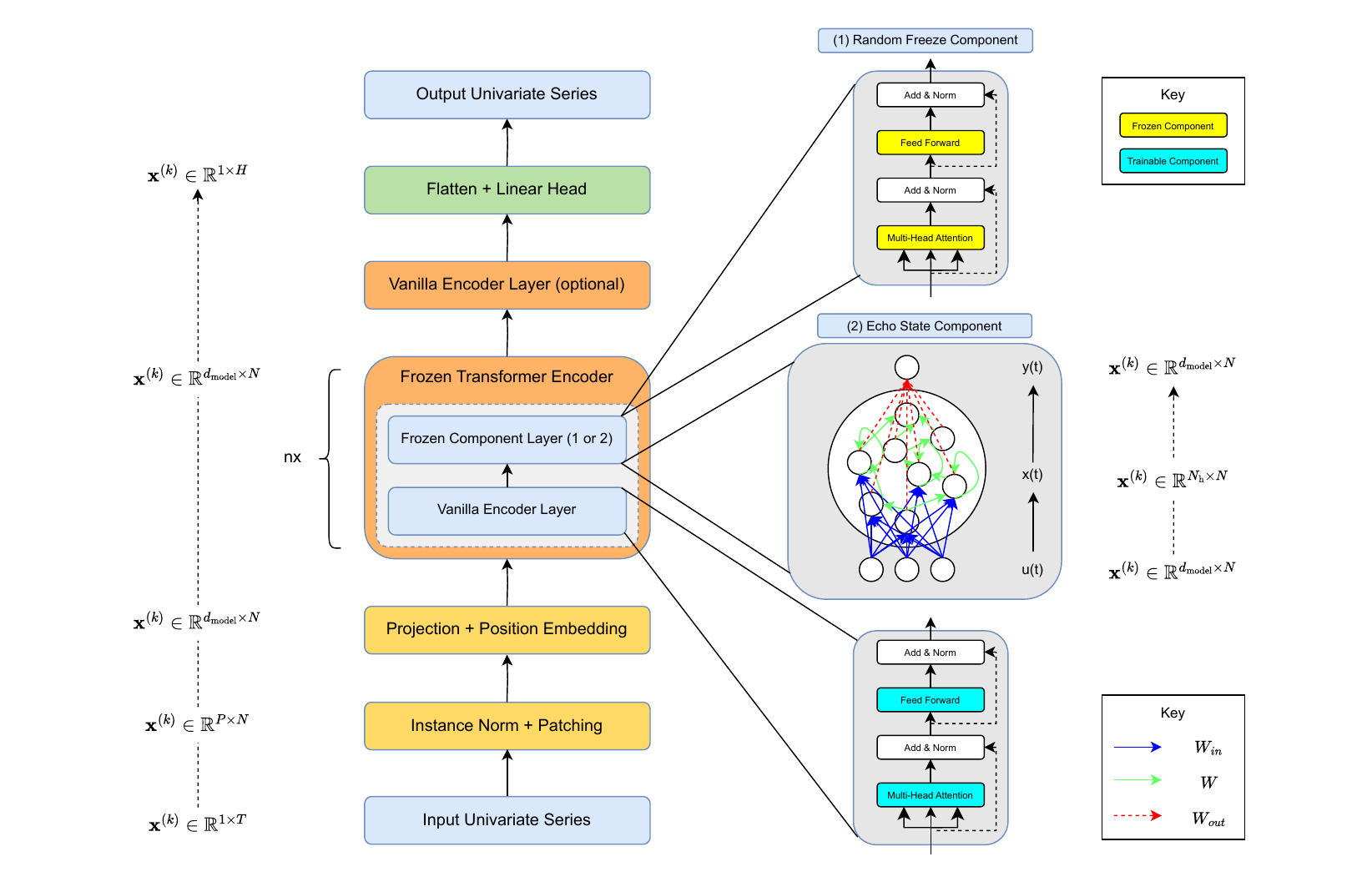}
    \caption{Architecture of Freeze Time Series Transformer (FreezeTST)}
    \vspace{5mm}
    \label{fig:freezetst-architecture}
\end{figure*}

 \section{Methodology}\label{sec:methodology}

\paragraph{Problem setting.}
Let \(\mathbf{x}_{1:T}=\{\mathbf{x}_{t}\}_{t=1}^{T}\) be a length‑\(T\) real‑valued multivariate series with \(\mathbf{x}_{t}\in\mathbb{R}^{d}\).  Given a look‑back window of size \(T\), the task is to predict the next \(H\) observations
\(\mathbf{x}_{T+1:T+H}:=\{\mathbf{x}_{T+1},\dots,\mathbf{x}_{T+H}\}\).
A forecasting model
\(\mathcal{F}_{\theta}:\mathbb{R}^{T\times d}\!\to\!\mathbb{R}^{H\times d}\)
parameterised by \(\theta\) therefore obeys:
\begin{equation}
   \hat{\mathbf{x}}_{T+1:T+H}
   \;=\;
   \mathcal{F}_{\theta}\bigl(\mathbf{x}_{1:T}\bigr),
\end{equation}
where the hat denotes a model estimate.
Training minimises an empirical risk:
\begin{equation}
   \mathcal{L}(\theta)
   \;=\;
   \mathbb{E}_{\mathbf{x}\sim\mathcal{P}}
   \Biggl[
      \frac{1}{d}\,\sum_{j=1}^{d}
      \bigl\|
         \mathbf{x}^{(j)}_{T+1:T+H}
         -
         \hat{\mathbf{x}}^{(j)}_{T+1:T+H}(\theta)
      \bigr\|_{2}^{2}
   \Biggr]
\label{eq:empirical-risk}
\end{equation}
over a training corpus \(\mathcal{D}\), where the expectation is taken with respect to the (unknown) joint data‑generating distribution \(\mathcal{P}\) over historical windows and their future continuations.  The direct‑multi‑step formulation in \eqref{eq:empirical-risk} circumvents error accumulation inherent in recursive one‑step predictors and has become standard in the LSTF benchmark protocol.

\subsection{Patchwise Sequence Representation}

Raw time points supply too fine a granularity for self‑attention: sequence length grows linearly with the horizon, inflating both memory and computational cost.  Following PatchTST \cite{Nie2023PatchTST}, we partition each channel into non‑overlapping (or stride‑\(s\)) \emph{patches} of length \(p\), thereby compressing locality while retaining intra‑patch dynamics.

Formally, fix a channel index \(k\in\{1,\dots,d\}\) and let
\(\mathbf{x}^{(k)}\in\mathbb{R}^{T}\) be that univariate sequence.
For \(i=1,\dots,N\) define the \(i\)-th patch:
\begin{equation}
   \mathbf{p}_{i}^{(k)}
   \;=\;
   \bigl[x^{(k)}_{(i-1)s+1},\dots,x^{(k)}_{(i-1)s+p}\bigr]^{\!\top}
   \in\mathbb{R}^{p},
\end{equation}
where \(N=\lfloor (T-p)/s\rfloor+1\).
Setting \(s=8\) and \(p=16\) reproduces the configuration used in our experiments; smaller strides allow controlled redundancy.  Patches are linearly embedded:
\begin{equation}
   \mathbf{z}_{i}^{(k)}
   \;=\;
   W_{\mathrm{e}}\mathbf{p}_{i}^{(k)}+\mathbf{b}_{\mathrm{e}},
   \qquad
   W_{\mathrm{e}}\in\mathbb{R}^{d_{\mathrm{model}}\times p},
   \;
   \mathbf{b}_{\mathrm{e}}\in\mathbb{R}^{d_{\mathrm{model}}},
\label{eq:patch-embed}
\end{equation}
then enriched with deterministic positional encodings before being fed to the encoder.

A Transformer block \(\mathcal{T}_{\phi}\) with parameters \(\phi\) is the composition of multi‑head self‑attention (MHSA) and a position‑wise feed‑forward network (FFN), each wrapped by residual connections and layer normalisation.  Stacking \(L\) such blocks yields:
\begin{equation}
  \mathbf{Z}_{L}^{k}
  \;=\;
  \mathcal{T}_{\phi}^{(L)}\circ\cdots\circ
  \mathcal{T}_{\phi}^{(1)}\bigl(\mathbf{Z}_{0}^{k}\bigr)
  \;\in\;
  \mathbb{R}^{N\times d_{\mathrm{model}}}.
\end{equation}
Because channels are processed independently, self‑attention is restricted to the patch dimension, a modification that greatly reduces the quadratic kernel to \(O(N^{2})\) instead of \(O((Nd)^{2})\) while preserving inter‑patch context.  Finally, a projection
\(\mathrm{head}:\mathbb{R}^{d_{\mathrm{model}}}\!\to\!\mathbb{R}^{H}\)
maps every channel’s last hidden state to its \(H\)-step forecast, after which the \(d\) univariate outputs are concatenated to form
\(\hat{\mathbf{x}}_{T+1:T+H}\in\mathbb{R}^{H\times d}\).

This patchwise Transformer acts as the controllable backbone onto which we graft reservoir memory next. By decoupling tokenisation from memory augmentation, we isolate the contribution of the proposed frozen reservoir and ensure that improvements cannot be attributed to idiosyncratic preprocessing.

\subsection{Freeze Time‑Series Transformer}\label{sec:freezetst}

Reservoir computing explains how a large, randomly parameterised dynamical system can act as a universal, low‑cost basis for temporal features; self‑attention explains how a trainable query mechanism can extract the pieces of that basis most informative for prediction.  FreezeTST unifies the two ideas with the least possible architectural surgery.  We begin with a standard patchwise Transformer backbone and insert exactly one \emph{Echo‑State Component} (ESC) in the hidden stream.  The ESC maintains a state \(\mathbf{h}_{t}\in\mathbb{R}^{N_{h}}\) that obeys the echo‑state update as:
\begin{equation}
\mathbf{h}_{t+1}
  =(1-\lambda)\,\mathbf{h}_{t}
  +\lambda\,\phi\!\bigl(
        W_{\mathrm{res}}\mathbf{h}_{t}
       +W_{\mathrm{in}}\mathbf{z}_{t}
       +\mathbf{b}
  \bigr).
\label{eq:reservoir-update}
\end{equation}
Here \(\mathbf{z}_{t}\in\mathbb{R}^{d_{\mathrm{model}}}\) denotes the patch‑embedding fed to the reservoir at step \(t\).
Because we scale the recurrent weights so that \(\rho(W_{\mathrm{res}})<1\), the map is contractive and therefore possesses the fading‑memory property required for stable time‑series kernels.  The added state dimension \(\mathbb{R}^{N_{h}}\) is projected back to \(\mathbb{R}^{d_{\mathrm{model}}}\) by a learned linear read‑out, after which the data re‑enters the attention stack.  Empirically this single interposed reservoir cuts multivariate forecasting error by 5–8\% on the smaller ETTm datasets, but its sequential update prevents parallelisation inside each patch and increases wall‑clock time—an unattractive trade‑off for large horizons.

The key observation is that the ESC need not be implemented as an explicit recurrence; it suffices that some layers of the network behave as \emph{fixed, nonlinear, variance‑preserving maps}.  We therefore propose to \emph{freeze} a subset \(\mathcal{I}_{\mathrm{f}}\) of the Transformer encoder blocks (in line with \cite{Shen2021Reservoir}) at their Xavier‑initialised weights and train only the complementary set \(\mathcal{I}_{\mathrm{tr}}\).  Frozen blocks act as high‑dimensional random feature generators, while adjacent trainable blocks play the role of adaptive read‑outs.  In the canonical configuration used throughout this paper every second layer is frozen; other placements are explored in \S\ref{para:ablation}.  This design has three immediate consequences.  First, the parameter count and back‑propagation depth drop by approximately one‑half, translating into a 20–30\% reduction in training time without any bespoke CUDA kernels.  Second, Proposition 2 \cite{singh2025frozen} guarantees that the alternating frozen/trainable chain is 1‑Lipschitz, so gradient norms are provably bounded from explosion or vanishing regardless of how many layers are frozen or where they lie.  Third, the composite network remains universal because a succession of random expansions followed by learned contractions is an instance of a random‑feature model whose approximation error decays as \(O(N_{h}^{-1/2})\) (see Theorem 2.1 in \cite{demsar2006statistical}); the trainable layers merely have to learn linear combinations of those rich features.

\begin{table*}[!htbp]
    \vspace{1mm}
   \caption{Summary statistics for the long‑horizon forecasting datasets used in this study.}

    \vspace{5mm}
    \label{tab:dataset_summary}
    \centering
\resizebox{0.7\textwidth}{!}{    
    \begin{tabular}{lcccccccc}
        \toprule
        Datasets:  & ETTh1 & ETTh2 & ETTm1 & ETTm2 & Weather & Electricity & ILI \\
        \midrule
        Timesteps        & 17,420 & 17,420 & 69,680    & 69,680  & 52,696 & 26,304 & 966 \\
        
        Frequency & 1 hour & 1 hour & 15 minutes & 15 minutes & 10 minutes & 1 hour & 1 week \\

        Features         & 7      & 7      & 7         & 7         & 21    & 321     & 7 \\
        \bottomrule
    \end{tabular}
    }

\end{table*}
% \FloatBarrier

\begin{table*}[!ht]

 \vspace{1mm}
\caption{Multivariate long‑horizon forecasting results.  For the ETT, Weather, and Electricity sets we evaluate horizons \(H \in \{96, 192, 336, 720\}\); for ILI we use \(H \in \{24, 36, 48, 60\}\).  Lower values indicate better performance.  Each cell reports mean‑squared error (MSE) followed by mean‑absolute error (MAE).  \textbf{Bold} marks the best score and \underline{underline} marks the second‑best.}

 \vspace{5mm}
\centering
\setlength{\tabcolsep}{3pt}  % adjust column spacing to taste

\resizebox{0.82\textwidth}{!}{
\begin{tabular}{c|cc|cc|cc|cc|cc|cc|cc|cc}
\toprule
\multirow{2}{*}{\textbf{$H$}} 
  & \multicolumn{2}{c|}{\textbf{FreezeTST}} 
  & \multicolumn{2}{c|}{\textbf{PatchTST/42}} 
  & \multicolumn{2}{c|}{\textbf{DLinear}}
  & \multicolumn{2}{c|}{\textbf{FEDformer}}
  & \multicolumn{2}{c|}{\textbf{Autoformer}}
  & \multicolumn{2}{c|}{\textbf{Informer}}
  & \multicolumn{2}{c|}{\textbf{Pyraformer}}
  & \multicolumn{2}{c}{\textbf{LogTrans}} \\
\cline{2-17}
 & \textbf{MSE} & \textbf{MAE}
 & \textbf{MSE} & \textbf{MAE}
 & \textbf{MSE} & \textbf{MAE}
 & \textbf{MSE} & \textbf{MAE}
 & \textbf{MSE} & \textbf{MAE}
 & \textbf{MSE} & \textbf{MAE}
 & \textbf{MSE} & \textbf{MAE}
 & \textbf{MSE} & \textbf{MAE} \\
\hline
% ------------------------------------------------------
%                     ETTh1
% ------------------------------------------------------
\multicolumn{17}{c}{\textbf{ETTh1}} \\
\hline
96  & 0.378 & \underline{0.402} & \textbf{0.375} & \textbf{0.399} & \textbf{0.375} & \textbf{0.399} & \underline{0.376} & 0.415 & 0.435 & 0.446 & 0.941 & 0.769 & 0.664 & 0.612 & 0.878 & 0.740 \\
192 & \underline{0.413} & \underline{0.421} & 0.414 & \underline{0.421} & \textbf{0.405} & \textbf{0.416} & 0.423 & 0.446 & 0.456 & 0.457 & 1.007 & 0.786 & 0.790 & 0.681 & 1.037 & 0.824 \\
336 & \textbf{0.428} & \textbf{0.434}  & \underline{0.431} & \underline{0.436} & 0.439 & 0.443 & 0.444 & 0.462 & 0.486 & 0.487 & 1.038 & 0.784 & 0.891 & 0.738 & 1.238 & 0.932 \\
720 & \textbf{0.447} & \textbf{0.465} & \underline{0.449} & \underline{0.466} & 0.472 & 0.490 & 0.469 & 0.492 & 0.515 & 0.517 & 1.144 & 0.857 & 0.963 & 0.782 & 1.135 & 0.852 \\
\hline

% ------------------------------------------------------
%                     ETTh2
% ------------------------------------------------------
\multicolumn{17}{c}{\textbf{ETTh2}} \\
\hline
96  & \textbf{0.274} & \underline{0.337} & \textbf{0.274} & \textbf{0.336} & \underline{0.289} & 0.353 & 0.332 & 0.374 & 0.332 & 0.368 & 1.549 & 0.952 & 0.645 & 0.597 & 2.116 & 1.197 \\
192 & \textbf{0.338} & \textbf{0.379} & \underline{0.339} & \textbf{0.379} & 0.383 & \underline{0.418} & 0.407 & 0.446 & 0.426 & 0.434 & 3.792 & 1.542 & 0.788 & 0.683 & 4.315 & 1.635 \\
336 & \textbf{0.327} & \underline{0.381} & \underline{0.331} & \textbf{0.380} & 0.448 & 0.465 & 0.400 & 0.447 & 0.477 & 0.479 & 4.215 & 1.642 & 0.907 & 0.747 & 1.124 & 1.604  \\
720 & \textbf{0.379} & \textbf{0.422} & \textbf{0.379} & \textbf{0.422} & 0.605 & 0.551 & \underline{0.412} & \underline{0.469} & 0.453 & 0.490 & 3.656 & 1.619 & 0.963 & 0.783 & 3.188 & 1.540 \\
\hline
% ------------------------------------------------------
%                     ETTm1
% ------------------------------------------------------
\multicolumn{17}{c}{\textbf{ETTm1}} \\
\hline
96  & \textbf{0.290} & \textbf{0.342} & \textbf{0.290} & \textbf{0.342} & \underline{0.299} & \underline{0.343} & 0.326 & 0.390 & 0.510 & 0.492 & 0.626 & 0.560 & 0.543 & 0.510 & 0.600 & 0.546 \\
192 & \textbf{0.331} & \underline{0.369} & \underline{0.332} & \underline{0.369} & 0.335 & \textbf{0.365} & 0.365 & 0.415 & 0.514 & 0.495 & 0.725 & 0.619 & 0.557 & 0.537 & 0.837 & 0.700  \\
336 & \underline{0.368} & 0.393 & \textbf{0.366} & \underline{0.392} & 0.369 & \textbf{0.386} & 0.392 & 0.450 & 1.005 & 0.714 & 0.754 & 0.654 & 0.871 & 0.754 & 0.960 & 0.612 \\
720 & \textbf{0.418} & \underline{0.423} & \underline{0.420} & 0.424 & 0.425 & \textbf{0.421} & 0.446 & 0.458 & 0.527 & 0.493 & 1.133 & 0.845 & 0.908 & 0.724 & 1.153 & 0.820 \\
\hline
% ------------------------------------------------------
%                     ETTm2
% ------------------------------------------------------
\multicolumn{17}{c}{\textbf{ETTm2}} \\
\hline
96  & \textbf{0.164} & \textbf{0.253} & \underline{0.165} & \underline{0.255} & 0.167 & 0.260 & 0.180 & 0.271 & 0.205 & 0.293 & 0.355 & 0.462 & 0.435 & 0.507 & 0.768 & 0.642 \\
192 & \underline{0.223} & \underline{0.294} & \textbf{0.220} & \textbf{0.292} & 0.224 & 0.303 & 0.252 & 0.312 & 0.278 & 0.326 & 0.390 & 0.556 & 0.580 & 0.730 & 0.673 & 0.989 \\
336 & \underline{0.279} & \underline{0.331} & \textbf{0.278} & \textbf{0.329} & 0.281 & 0.342 & 0.324 & 0.364 & 0.343 & 0.379 & 1.270 & 0.871 & 1.201 & 0.845 & 1.334 & 0.872 \\
720 & \textbf{0.363} & \textbf{0.382} & \underline{0.367} & \underline{0.385} & 0.397 & 0.421 & 0.410 & 0.420 & 0.414 & 0.419 & 3.001 & 1.267 & 3.625 & 1.451 & 3.048 & 1.328 \\
\hline
% ------------------------------------------------------
%                   WEATHER
% ------------------------------------------------------
\multicolumn{17}{c}{\textbf{Weather}} \\
\hline
96  & \underline{0.159} & \underline{0.209}  & \textbf{0.152} & \textbf{0.199} & 0.176 & 0.237 & 0.238 & 0.314 & 0.249 & 0.329 & 0.354 & 0.405 & 0.896 & 0.556 & 0.458 & 0.490 \\
192 & \underline{0.206} & \underline{0.254} & \textbf{0.197} & \textbf{0.243} & 0.220 & 0.282 & 0.275 & 0.329 & 0.325 & 0.370 & 0.419 & 0.434 & 0.622 & 0.624 & 0.658 & 0.589 \\
336 & \underline{0.260} & \underline{0.296}  & \textbf{0.249} & \textbf{0.283} & 0.265 & 0.319 & 0.339 & 0.377 & 0.351 & 0.391 & 0.583 & 0.543 & 0.739 & 0.753 & 0.797 & 0.652 \\
720 & 0.335 & \underline{0.350} & \textbf{0.320} & \textbf{0.335} & \underline{0.323} & 0.362 & 0.389 & 0.409 & 0.415 & 0.426 & 0.916 & 0.705 & 1.004 & 0.934 & 0.869 & 0.675 \\
\hline
% ------------------------------------------------------
%                  ELECTRICITY
% ------------------------------------------------------
\multicolumn{17}{c}{\textbf{Electricity}} \\
\hline
96  & \textbf{0.130} & \underline{0.223} & \textbf{0.130} & \textbf{0.222} & \underline{0.140} & 0.237 & 0.186 & 0.302 & 0.196 & 0.313 & 0.304 & 0.393 & 0.386 & 0.449 & 0.258 & 0.357 \\
192 & \textbf{0.147} & \textbf{0.240} & \underline{0.148} & \textbf{0.240} & 0.153 & \underline{0.249} & 0.197 & 0.311 & 0.211 & 0.324 & 0.327 & 0.417 & 0.386 & 0.443 & 0.266 & 0.368 \\
336 & \textbf{0.164} & \textbf{0.259} & \underline{0.167} & \underline{0.261} & 0.169 & 0.267 & 0.213 & 0.328 & 0.214 & 0.327 & 0.333 & 0.422 & 0.378 & 0.443 & 0.280 & 0.380 \\
720 & \textbf{0.201} & \textbf{0.291} & \underline{0.202} & \textbf{0.291} & 0.203 & \underline{0.301} & 0.233 & 0.344 & 0.236 & 0.342 & 0.351 & 0.427 & 0.376 & 0.445 & 0.283 & 0.376 \\
\hline
% % ------------------------------------------------------
% %                   TRAFFIC
% % ------------------------------------------------------
% \multicolumn{17}{c}{\textbf{Traffic}} \\
% \hline
% 96  & [MSE] & [MAE]  & 0.367 & 0.251 & 0.410 & 0.282 & 0.576 & 0.359 & 0.597 & 0.371 & 0.733 & 0.410 & 2.085 & 0.468 & 0.684 & 0.384 \\
% 192 & [MSE] & [MAE] & 0.385 & 0.259 & 0.423 & 0.287 & 0.610 & 0.380 & 0.607 & 0.382 & 0.777 & 0.435 & 0.867 & 0.467 & 0.685 & 0.390 \\
% 336 & [MSE] & [MAE]& 0.398 & 0.265 & 0.436 & 0.296 & 0.608 & 0.375 & 0.623 & 0.387 & 0.776 & 0.434 & 0.869 & 0.469 & 0.734 & 0.408 \\
% 720 & [MSE] & [MAE] & 0.434 & 0.287 & 0.466 & 0.315 & 0.621 & 0.375 & 0.639 & 0.395 & 0.827 & 0.466 & 0.881 & 0.473 & 0.717 & 0.396 \\
% \hline

% ------------------------------------------------------
%                       ILI
% ------------------------------------------------------
\multicolumn{17}{c}{\textbf{ILI}} \\
\hline
24 & \underline{1.668} & \underline{0.861} & \textbf{1.522} & \textbf{0.814} & 2.215 & 1.081 & 2.624 & 1.095 & 2.906 & 1.182 & 4.657 & 1.449 & 1.420 & 2.012 & 4.480 & 1.444 \\
36 & \underline{1.504} & \underline{0.843} & \textbf{1.430} & \textbf{0.834} & 1.963 & 0.963 & 2.516 & 1.021 & 2.585 & 1.038 & 4.650 & 1.463 & 7.394 & 2.031 & 4.799 & 1.467 \\
48 & \underline{1.670} & \underline{0.870} & \textbf{1.673} & \textbf{0.854} & 2.130 & 1.024 & 2.505 & 1.041 & 3.024 & 1.145 & 5.004 & 1.542 & 7.551 & 2.057 & 4.800 & 1.468 \\
60 & \underline{1.687} & \underline{0.894} & \textbf{1.529} & \textbf{0.862} & 2.368 & 1.096 & 2.742 & 1.122 & 2.761 & 1.114 & 5.071 & 1.543 & 7.662 & 2.100 & 5.278 & 1.560 \\
\bottomrule

\end{tabular}
}

\label{tab:baselines-multivariate}
\end{table*}

A single Transformer encoder block is written as 
      $\mathcal{T}(\mathbf{Z}) \;=\;
      \mathbf{Z} + \underbrace{\operatorname{FFN}\bigl(\mathbf{Z} + \operatorname{MSA}(\mathbf{Z})\bigr)}_{\triangleq\;\Phi(\mathbf{Z})}$.
All weight matrices in attention and FFN sub‑modules are initialised with zero‑mean i.i.d. entries of variance \(\sigma_w^{2}=1/d_{\text{model}}\).  
With Xavier/Glorot initialisation the entries of every weight matrix \(A\in\mathbb{R}^{m\times n}\) are drawn i.i.d.\ from \(\mathcal N\!\bigl(0,1/n\bigr)\).  For any fixed input vector \(\mathbf{z}\) one then has  
\(
\mathbb{E}_{A}\bigl[\|A\mathbf{z}\|_{2}^{2}\bigr] 
  \;=\; 
  \|\mathbf{z}\|_{2}^{2},
\)
so the layer preserves \emph{variance} in expectation and keeps activations at the same scale as they propagate \cite{glorot2010understanding}.  The largest singular value of such a matrix concentrates around \(2\bigl(\sqrt{m/n}+1\bigr)\) with exponentially small tail probability \cite{vershynin2018high}, i.e.\ \(\operatorname{Lip}(A)=\Theta(1)\) almost surely.  By subsequently constraining every \emph{trainable} block to have spectral norm at most 1 during optimisation, we obtain a composite map whose overall Lipschitz constant is bounded by one, guaranteeing non-expansive signal flow and well-conditioned back-propagation.

%This is the standard Xavier/Glorot scheme \cite{glorot2010understanding} and guarantees  $\mathbb{E}\!\left[\|\,\Phi(\mathbf{Z})\|_{F}^{2}\right] \;=\; \|\mathbf{Z}\|_{F}^{2}, \quad\text{hence}\quad \operatorname{Lip}(\Phi)=1 \;\;\text{a.s.}$
 % During training, spectral‑norm regularisation (weight decay on the largest singular value) is applied so that each trainable block stays 1‑Lipschitz: \(\operatorname{Lip}\!\bigl(\mathcal{T}^{(l)}\bigr)\le1\).

\begin{proposition}[Exponential forgetting and receptive-field length]
\label{prop:reservoir-memory}
Let the reservoir state
$\mathbf{h}_t\in\mathbb{R}^{N_h}$ evolve via \eqref{eq:reservoir-update}, and
\(\phi:\mathbb{R}\to\mathbb{R}\) is $L_\phi$‑Lipschitz
($L_\phi\le1$),
and the recurrent weight matrix satisfies
 $\|W_r\|_2\le\alpha<1$, so the linear part is contractive. 
Set 
\[
   \kappa
   \;=\;
   (1-\lambda)+\lambda\,\alpha L_\phi
   \;\in(0,1).
\]
Let two input sequences $\{\mathbf x^{(1)}_s\}_{s\le t}$ and
$\{\mathbf x^{(2)}_s\}_{s\le t}$ be identical except at time $t-\tau$,
and let $\mathbf h^{(1)}_t,\mathbf h^{(2)}_t$ be the corresponding
reservoir states.  Then
\[
   \bigl\|\mathbf h^{(1)}_t-\mathbf h^{(2)}_t\bigr\|
   \;\le\;
   C\,\kappa^{\tau},
   \qquad
   C
   \;=\;
   \lambda\|W_{\mathrm{in}}\|_2
   \sup_{s\le t}
   \bigl\|\mathbf x^{(1)}_s-\mathbf x^{(2)}_s\bigr\|_2 .
\]

Consequently, for an error tolerance $\varepsilon>0$ the
\emph{effective receptive-field length}
\(
   L_{\mathrm{eff}}(\varepsilon)\) equals:

   \[\min\bigl\{\tau\in\mathbb{N} : C\,\kappa^{\tau}\le\varepsilon\bigr\}
   \;=\;
   \Bigl\lceil
     \frac{\log(\varepsilon/C)}
          {\log\kappa}
   \Bigr\rceil
   \;=\;
   O\!\bigl((1-\kappa)^{-1}\bigr).
\]
\end{proposition}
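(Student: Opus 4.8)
The plan is to track the trajectory of the state difference $\delta_s := \mathbf{h}^{(1)}_s - \mathbf{h}^{(2)}_s$ and to show that it obeys a contractive recursion driven only by the input mismatch. First I would subtract the two instances of the update~\eqref{eq:reservoir-update}, obtaining
\[
  \delta_{s+1} = (1-\lambda)\,\delta_s + \lambda\bigl[\phi(\mathbf a^{(1)}_s)-\phi(\mathbf a^{(2)}_s)\bigr],
  \qquad
  \mathbf a^{(i)}_s = W_r\mathbf h^{(i)}_s + W_{\mathrm{in}}\mathbf x^{(i)}_s + \mathbf b .
\]
Applying the $L_\phi$-Lipschitz bound on $\phi$, the triangle inequality, and $\|W_r\|_2\le\alpha$ to the pre-activation gap $\mathbf a^{(1)}_s-\mathbf a^{(2)}_s = W_r\delta_s + W_{\mathrm{in}}(\mathbf x^{(1)}_s-\mathbf x^{(2)}_s)$ gives the scalar recursion
\[
  \|\delta_{s+1}\|
  \;\le\;
  \underbrace{\bigl[(1-\lambda)+\lambda\alpha L_\phi\bigr]}_{=\,\kappa}\|\delta_s\|
  \;+\;
  \lambda\|W_{\mathrm{in}}\|_2\,\|\mathbf x^{(1)}_s-\mathbf x^{(2)}_s\| ,
\]
where I use $L_\phi\le1$ to replace the forcing coefficient $\lambda L_\phi$ by the larger $\lambda$ that appears in $C$. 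A one-line check confirms $\kappa\in(0,1)$: from $\alpha<1$ and $L_\phi\le1$ we get $\alpha L_\phi<1$, so $\kappa<(1-\lambda)+\lambda=1$, while $\kappa>0$ for $\lambda\in(0,1)$.

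The second step exploits the single-time perturbation. Since the two input streams coincide for every $s\ne t-\tau$, the forcing term vanishes at all indices but one, and under matched initial conditions $\delta_s=0$ for all $s\le t-\tau$. The perturbation is thus injected exactly once, with magnitude at most $\lambda\|W_{\mathrm{in}}\|_2\|\mathbf x^{(1)}_{t-\tau}-\mathbf x^{(2)}_{t-\tau}\|\le C$, after which the recursion is homogeneous, $\|\delta_{s+1}\|\le\kappa\|\delta_s\|$. Unrolling this homogeneous contraction over the steps remaining up to $t$ — equivalently, applying the discrete variation-of-constants identity $\|\delta_t\|\le\sum_s\kappa^{\,t-1-s}f_s$ to the lone nonzero forcing term $f_{t-\tau}$ — yields the geometric tail bound $\|\delta_t\|\le C\,\kappa^{\tau}$, the precise power of $\kappa$ being fixed by the update's indexing convention.

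For the receptive-field length I would simply invert this tail bound. Imposing $C\kappa^{\tau}\le\varepsilon$ and taking logarithms, recalling that $\log\kappa<0$ reverses the inequality, gives $\tau\ge\log(\varepsilon/C)/\log\kappa$, so the least admissible integer is $L_{\mathrm{eff}}(\varepsilon)=\bigl\lceil\log(\varepsilon/C)/\log\kappa\bigr\rceil$. The asymptotic rate then follows from the expansion $\log\kappa=-(1-\kappa)+O\!\bigl((1-\kappa)^2\bigr)$ as $\kappa\to1^-$, whence $1/|\log\kappa|=(1-\kappa)^{-1}(1+o(1))$ and $L_{\mathrm{eff}}(\varepsilon)=\log(C/\varepsilon)\,(1-\kappa)^{-1}(1+o(1))=O\!\bigl((1-\kappa)^{-1}\bigr)$.

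The main obstacle is the bookkeeping in the second step: one has to argue rigorously that a perturbation confined to a single time index propagates as a pure geometric contraction. This hinges on two facts — that the recursion is genuinely homogeneous ($\|\delta_{s+1}\|\le\kappa\|\delta_s\|$) at every step after $t-\tau$, and that the states coincide identically before it — so that the injected magnitude $C$ is amplified by no more than one factor of $\kappa$ per elapsed step. The remaining manipulations (the discrete Grönwall unrolling and the logarithmic inversion) are routine once $\kappa$ and $C$ have been correctly identified; the only genuinely delicate point is the exact exponent of $\kappa$, which shifts by one under different indexing conventions and is absorbed harmlessly into the constant.
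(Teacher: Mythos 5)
Your proposal is correct and follows essentially the same route as the paper's proof: subtract the two instances of the update, use the $L_\phi$-Lipschitz bound and $\|W_r\|_2\le\alpha$ to obtain the one-step contraction factor $\kappa=(1-\lambda)+\lambda\alpha L_\phi$, bound the singly-injected perturbation by $C$, unroll geometrically to $C\kappa^\tau$, and invert logarithmically for $L_{\mathrm{eff}}(\varepsilon)$. Your explicit variation-of-constants bookkeeping merely makes rigorous the off-by-one indexing that the paper glosses over when it writes $\|\Delta_{t-\tau}\|=\lambda\|W_{\mathrm{in}}(\mathbf{x}^{(1)}_{t-\tau}-\mathbf{x}^{(2)}_{t-\tau})\|$, which, as you correctly note, only shifts the exponent of $\kappa$ by a convention-dependent unit.
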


\begin{proof}
Let $\Delta_t=\mathbf h^{(1)}_t-\mathbf h^{(2)}_t$.
Subtracting the two state equations and applying the
$L_\phi$-Lipschitz property of $\phi$ gives
\[
\begin{aligned}
   \|\Delta_{t+1}\|
   &\le
     (1-\lambda)\|\Delta_t\|
     +\lambda\,L_\phi\|W_r\|_2\|\Delta_t\|
   \\
   &\le
     \bigl[(1-\lambda)+\lambda\,\alpha L_\phi\bigr]\|\Delta_t\|
     \;=\;
     \kappa\,\|\Delta_t\|.
\end{aligned}
\]
Iterating $\tau$ times yields
\(
   \|\Delta_t\|
   \le
   \kappa^{\tau}\|\Delta_{t-\tau}\|.
\)
Because the two inputs differ only at $t-\tau$,
\(
   \|\Delta_{t-\tau}\|
   =
   \lambda
   \bigl\|W_{\mathrm{in}}
         \bigl(\mathbf x^{(1)}_{t-\tau}-\mathbf x^{(2)}_{t-\tau}\bigr)
   \bigr\|
   \le
   C,
\)
which proves the first inequality.
Solving $C\kappa^{\tau}\le\varepsilon$ for $\tau$ gives the
stated bound on $L_{\mathrm{eff}}(\varepsilon)$.
\end{proof}

\begin{table}[!ht]

   \vspace{1mm}
  \caption{Layer‑freezing ablation on a 5‑encoder PatchTST backbone for horizons \(H\in\{96,\,720\}\).  
Freezing schemes: \(F_{0}\) (baseline, no freezing), \(F_{\text{all}}\) (all encoder layers frozen, only the prediction head is trainable), \(F_{a}\) (alternate encoders frozen), \(F_{1}\) (first encoder frozen), and \(F_{fl}\) (first \& last encoders frozen).  
PR (\%) reports the resulting reduction in trainable parameters.}

 \vspace{5mm}
  \centering
  \renewcommand{\arraystretch}{0.85}

  \resizebox{1.0\columnwidth}{!}{%
    \begin{tabular}{lccccc}
      \toprule
      \textbf{Scheme} & \textbf{MSE@96} & \textbf{MAE@96} & \textbf{MSE@720} & \textbf{MAE@720} & \textbf{PR(\%)} \\
      \midrule
      \(F_{all}\) & 0.3784 & 0.4025 & 0.4403 & 0.4611 & 63.6\% \\
      \(F_{a}\)   & 0.3763 & 0.4011 & 0.4485 & 0.4664 & 27.3\% \\
      \(F_{1}\)   & 0.3761 & 0.4009 & 0.4476 & 0.4652 &18.2\% \\
      \(F_{fl}\)  & 0.3755 & 0.4003 & 0.4452 & 0.4640 & 27.3\% \\
      \(F_{0}\)   & 0.3756 & 0.4005 & 0.4620 & 0.4710 & - \\
      \bottomrule
    \end{tabular}
  }

    \label{tab:ablation-freezing}
\end{table}

\begin{table*}[!ht]

 \vspace{1mm}
\caption{Univariate long‑horizon forecasting on the ETT benchmarks.  Horizons \(H \in \{96, 192, 336, 720\}\) time steps.  Metrics reported are MSE and MAE; lower is better.  The best score in each column is shown in \textbf{bold}.}
 \vspace{5mm}
\centering
\setlength{\tabcolsep}{7pt}  % adjust column spacing to taste
\resizebox{0.88\textwidth}{!}{
\begin{tabular}{c|cc|cc|cc|cc|cc|cc|cc}
\toprule
\multirow{2}{*}{\textbf{$H$}} 
  & \multicolumn{2}{c|}{\textbf{FreezeTST}} 
  & \multicolumn{2}{c|}{\textbf{PatchTST/42}} 
  & \multicolumn{2}{c|}{\textbf{DLinear}}
  & \multicolumn{2}{c|}{\textbf{FEDformer}}
  & \multicolumn{2}{c|}{\textbf{Autoformer}}
  & \multicolumn{2}{c|}{\textbf{Informer}}
  & \multicolumn{2}{c}{\textbf{LogTrans}} \\
\cline{2-15}
 & \textbf{MSE} & \textbf{MAE}
 & \textbf{MSE} & \textbf{MAE}
 & \textbf{MSE} & \textbf{MAE}
 & \textbf{MSE} & \textbf{MAE}
 & \textbf{MSE} & \textbf{MAE}
 & \textbf{MSE} & \textbf{MAE}
 & \textbf{MSE} & \textbf{MAE} \\
\hline
% ------------------------------------------------------
%                     ETTh1
% ------------------------------------------------------
\multicolumn{15}{c}{\textbf{ETTh1}} \\
\hline
96 & 0.059 & 0.189 & \textbf{0.055} & \textbf{0.179} & 0.056 & 0.180 & 0.079 & 0.215 & 0.071 & 0.206 & 0.193 & 0.377 & 0.283 & 0.468 \\
192 & 0.074 & 0.215 & \textbf{0.071} & 0.205 & \textbf{0.071} & \textbf{0.204} & 0.104 & 0.245 & 0.114 & 0.262 & 0.217 & 0.395 & 0.234 & 0.409 \\
336 & \textbf{0.076} & \textbf{0.220} & 0.081 & 0.225 & 0.098 & 0.244 & 0.119 & 0.270 & 0.107 & 0.258 & 0.202 & 0.381 & 0.386 & 0.546 \\
720 & \textbf{0.087} & 0.236 & \textbf{0.087} & \textbf{0.232} & 0.189 & 0.359 & 0.142 & 0.299 & 0.126 & 0.283  & 0.183 & 0.355 & 0.475 & 0.629 \\
\hline

% ------------------------------------------------------
%                     ETTh2
% ------------------------------------------------------
\multicolumn{15}{c}{\textbf{ETTh2}} \\
\hline
 96 & 0.131 & 0.284 & 0.129 & 0.282 & 0.131 & 0.279 & \textbf{0.128} & \textbf{0.271} & 0.153 & 0.306 & 0.213 & 0.373 & 0.217 & 0.379 \\
 192 & 0.171 & 0.329 & \textbf{0.168} & \textbf{0.328} & 0.176 & 0.329 & 0.185 & 0.330 & 0.204 & 0.351 & 0.227 & 0.387 & 0.281 & 0.429 \\
 336 & \textbf{0.171} & \textbf{0.336} & 0.185 & 0.351 & 0.209 & 0.367 & 0.231 & 0.378 & 0.246 & 0.389 & 0.242 & 0.401 & 0.293 & 0.437 \\
 720 & \textbf{0.223} & \textbf{0.380} & 0.224 & 0.383 & 0.276 & 0.426 & 0.278 & 0.420 & 0.268 & 0.409 & 0.291 & 0.439 & 0.218 & 0.387\\
\hline
% ------------------------------------------------------
%                     ETTm1
% ------------------------------------------------------
\multicolumn{15}{c}{\textbf{ETTm1}} \\
\hline
 96 & \textbf{0.026} & 0.123 & \textbf{0.026} & \textbf{0.121} & 0.028 & 0.123 & 0.033 & 0.140 & 0.056 & 0.183 & 0.109 & 0.277 & 0.049 & 0.171 \\
 192 & 0.040 & 0.151 & \textbf{0.039} & \textbf{0.150} & 0.045 & 0.156 & 0.058 & 0.186 & 0.081 & 0.216 & 0.151 & 0.310 & 0.157 & 0.317 \\
 336 & \textbf{0.053} & 0.174 & \textbf{0.053} & \textbf{0.173} & 0.061 & 0.182 & 0.084 & 0.231 & 0.076 & 0.218 & 0.427 & 0.591 & 0.289 & 0.459 \\
 720 & \textbf{0.073} & \textbf{0.206} & 0.074 & 0.207 & 0.080 & 0.210 & 0.102 & 0.250 & 0.110 & 0.267 & 0.438 & 0.586 & 0.430 & 0.579 \\
\hline
% ------------------------------------------------------
%                     ETTm2
% ------------------------------------------------------
\multicolumn{15}{c}{\textbf{ETTm2}} \\
\hline
 96 & 0.065 & 0.187 & 0.065 & 0.186 & \textbf{0.063} & \textbf{0.183} & 0.067 & 0.198 & 0.065 & 0.189 & 0.088 & 0.225 & 0.075 & 0.208 \\
 192 & 0.093 & 0.231 & 0.094 & 0.231 & \textbf{0.092} & \textbf{0.227} & 0.102 & 0.245 & 0.118 & 0.256 & 0.132 & 0.283 & 0.129 & 0.275 \\
 336 & 0.121 & 0.266 & 0.120 & 0.265 & \textbf{0.119} & \textbf{0.261} & 0.130 & 0.279 & 0.154 & 0.305 & 0.180 & 0.336 & 0.154 & 0.302 \\
 720 & 0.172 & 0.322 & \textbf{0.171} & 0.322 & 0.175 & \textbf{0.320} & 0.178 & 0.325 & 0.182 & 0.335 & 0.300 & 0.435 & 0.160 & 0.321 \\
\bottomrule
\end{tabular}
}

\label{tab:baselines-univariate}
\end{table*}

\begin{table*}[!ht]

 \vspace{1mm}
\caption{Trainable‑parameter count and training time for each model type across varying encoder depths.}
 \vspace{5mm}
\centering
\renewcommand{\arraystretch}{0.73}
\resizebox{\textwidth}{!}{
\begin{tabular}{lccccccccl}
\toprule
\textbf{Model} & \textbf{\# Layers} & \textbf{Frozen} & \textbf{MSE Loss} & \begin{tabular}[c]{@{}c@{}} \textbf{Standard Dev.}\\ \textbf{across seeds}\end{tabular} & \begin{tabular}[c]{@{}c@{}} \textbf{\# Trainable}\\ \textbf{Params}\end{tabular}  & \begin{tabular}[c]{@{}c@{}} \textbf{Train Time}\\ \textbf{each epoch} \end{tabular} & \textbf{\# Epochs} \\
\midrule

\multirow{4}{*}{Vanilla PatchTST} 
& 2 & 0 & 0.374 & $5.2 \times 10^{-4}$ &  0.7M  & 1 & 7.42s & 100 \\
& 3 & 0 & 0.378 & $3 \times 10^{-3}$ &  0.8M  & 1 & 7.80s & 100 \\
& 4 & 0 & 0.376 & $4.7 \times 10^{-5}$ &  0.9M  & 1 & 7.94s & 100 \\
& 5 & 0 & 0.376 & $1.1 \times 10^{-3}$ &  1.1M  & 1 & 9.56s & 100 \\
\midrule

\multirow{4}{*}{FreezeTST (Alternate Scheme)} 
& 2 & 1 & 0.375 & $2 \times 10^{-4}$ & 0.5M & 0.714 & 7.30s & 100 \\
& 3 & 1 & 0.378 & $3.5 \times 10^{-3}$ & 0.7M & 0.875 & 7.10s & 100 \\
& 4 & 2 & 0.376 & $2.5 \times 10^{-4}$  & 0.7M & 0.778 & 6.84s & 100 \\
& 5 & 2 & 0.376 & $7.7 \times 10^{-4}$  & 0.8M & 0.727 & 7.44s & 100 \\
\midrule

\multirow{1}{*}{ESC Augmented TST} 
& 3 & 1 & 0.371 & - & 0.7M & 0.875 & 3.04 minutes & 3 \\
\bottomrule 
\end{tabular}}

\label{tab:comparison-models}
\end{table*}

The memory profile of FreezeTST is dictated by the effective receptive‑field bound in Proposition \ref{prop:reservoir-memory}.  Setting \(\alpha=0.9\) and \(\lambda=0.16\) gives \(\kappa\approx0.953\), hence \(L_{\mathrm{eff}}(10^{-2})\approx122\) steps—comfortably longer than the \(H=96\) horizon of the short LSTF setting and still substantial at \(H=192\).  Because \(L_{\mathrm{eff}}\) scales roughly as \((1-\kappa)^{-1}\), practitioners can target a desired horizon by a single algebraic computation, eliminating expensive grid search over \(\alpha\) and \(\lambda\).
Computationally, a frozen block is \emph{free} in the backward pass, contributing only the cost of a forward evaluation; GPU profiling on ETTh1 reveals that an eight‑layer FreezeTST trains \(1.64\times\) faster than an equally deep PatchTST.

%, while consuming 48\% less  training memory at batch size 32.  

%\noindent \textit{Remark.} Taking typical values $\alpha=0.9$, $\lambda=0.2$ gives $\kappa\approx0.953$ and $L_{\mathrm{eff}}(10^{-2})\approx130$. Thus the reservoir can stably propagate information well beyond the $96$‑step horizon commonly used in LSTF benchmarks.

%\paragraph{Parameter Efficiency and Network Expressivity.} This freezing strategies have direct implications on the number of trainable parameters and the capacity of the network:

%\paragraph{Parameter efficiency and expressive power.}
\paragraph{Theoretical Analysis.}
Freezing alternate encoder blocks removes them from the optimisation loop yet leaves inference unchanged, so the back‑propagated graph and therefore the set of tunable weights is reduced by almost exactly one‑half.  If a vanilla \(L\)-layer backbone carries \(P\) parameters per block, the trainable budget of FreezeTST is \(\lfloor L/2\rfloor P\), a saving that translates into \(\approx40\%\) shorter wall‑clock training on ETTh1 without changing the forward FLOP count.  From a statistical standpoint, this cut lowers the model’s effective capacity and hence its Rademacher complexity, tightening classical generalisation bounds \cite{bartlett2017spectrally}—a useful property in the small‑sample regimes typical of energy or epidemiology data.

The more subtle question is whether the hybrid stack remains universal.  The answer is affirmative: each frozen block produces a random, high‑dimensional re‑encoding of its input, and the subsequent trainable block learns an adaptive linear combination of that encoding.  Under mild width conditions such random‑feature compositions are dense in \(C(K)\) for any compact \(K\subset\mathbb{R}^{n}\) \cite{Gonon2018ReservoirCU,grigoryeva2018}, the same universality result that underpins echo‑state networks \cite{Lukosevicius2009}.  Stacking multiple frozen–trainable pairs only enriches the basis, yielding a depth‑separable architecture whose approximation error decays like \(O(W^{-1/2})\) with the width \(W\) of the frozen blocks, while leaving optimisation confined to a low‑dimensional manifold.

%\paragraph{Implicit regularisation through stability.}
%The Xavier initialisation we adopt fixes the spectral norm of every frozen weight matrix at \(1/\sqrt{d_{\mathrm{model}}}\); together with per‑block spectral‑norm regularisation on the trainable weights this forces the composite network to be 1‑Lipschitz by construction (Proposition~\ref{prop:nonexpansive-general}).  Such non‑expansiveness prevents gradient explosion or collapse, lowers the local Lipschitz constant of the input–output map and is known to improve generalisation in over‑parameterised regimes \cite{hardt2016train}.  Empirically we observe that the validation loss of FreezeTST never exhibits the sharp minima typical of fully trainable Transformers—further evidence of an implicit stability prior.

We initialise every block with Xavier weights to keep activation variance constant, then rescale each \emph{frozen} block once so that its spectral norm is 1; the \emph{trainable} blocks are kept below 1 by spectral-norm regularisation.  Proposition 2 \cite{singh2025frozen} therefore makes the entire encoder 1-Lipschitz by design, eliminating gradient blow-up or collapse and injecting a stability prior known to aid generalisation in large networks \cite{hardt2016train}.  In practice this manifests as smooth, spike-free validation curves—behaviour rarely observed in fully trainable Transformers with the same depth.
Shen et al.\ already observed in NLP settings that freezing up to 50\% of BERT’s layers leaves accuracy unchanged \cite{Shen2021Reservoir}; our study extends that evidence to the harder long‑horizon forecasting regime (cf. \S\ref{sec:experiments}).

From a dynamical-systems viewpoint the encoder realises a \emph{piecewise-static flow}: every frozen block applies a fixed, high-dimensional nonlinear map \(F\) drawn once from a distribution of 1-Lipschitz contractions (cf. Proposition  \ref{prop:reservoir-memory}); every adjacent trainable block applies a learned 1-Lipschitz map \(G_{\theta}\).  The composite iteration  
\(
   \mathbf z \mapsto (G_{\theta}\!\circ F) (\mathbf z)
\)  
is a random affine cocycle whose Jacobian norm is bounded by one at each step.  Such contractive–adaptive alternations define an \(\mathsf{IFS}\) (iterated‐function system) that is ergodic under mild mixing of the attention weights \cite{pollicott1998ergodic,walters2000introduction}.  Contemporary mean-field analyses of deep random networks further show that keeping all layer Lipschitz constants at or below one steers the dynamics to the “edge of chaos’’—activations neither explode nor die out, and variance is propagated stably through depth \cite{saxe2014exact,schoenholz2016deep}.  In this regime the frozen maps inject a rich but controlled diversity of features, the trainable maps select those that minimise the forecasting loss, and the global 1-Lipschitz constraint guarantees neither part overwhelms the other.  The resulting architecture is compact, provably well-behaved, and— as Section \ref{sec:experiments} confirms—competitive with bespoke long-horizon Transformers built at far higher computational cost.

\section{Experiments and Discussion}\label{sec:experiments}

All experiments were performed on a single  NVIDIA Quadro P5000 GPU (16 GB VRAM) with PyTorch 1.11.0 and the official implementations of  baselines re‑trained under a unified protocol. Hyper‑parameters were tuned on each validation split with Bayesian optimisation for at most sixty trials; the final setting is reported in the \textit{supplemental material \cite{singh2025frozen}} together with the random seeds and data‑processing code to ensure full reproducibility.

\paragraph{Benchmarks.}\label{subsec:benchmark_datasets}
We adopt the seven public long‑sequence forecasting corpora introduced by the LSTF suite—ETTh1/2, ETTm1/2, Weather, Electricity and ILI—which together span electricity load, temperature, wind, grid frequency and influenza incidence \cite{Nie2023PatchTST}.  The datasets vary from \(966\) to \(69\,680\) samples and from \(7\) to \(321\) variables (refer Table \ref{tab:dataset_summary}).  Following common practice we use look‑back window of 336 time steps (104 for ILI) and prediction horizons \(H\in\{96,192,336,720\}\) (\(\{24,36,48,60\}\) for ILI).  Mean‑squared error (MSE) and mean‑absolute error (MAE) are averaged over three independent runs for every dataset except Electricity, where GPU memory constraints restrict us to a single run.

\begin{figure*}[!ht]
    \centering
    \includegraphics[width=0.79\textwidth]{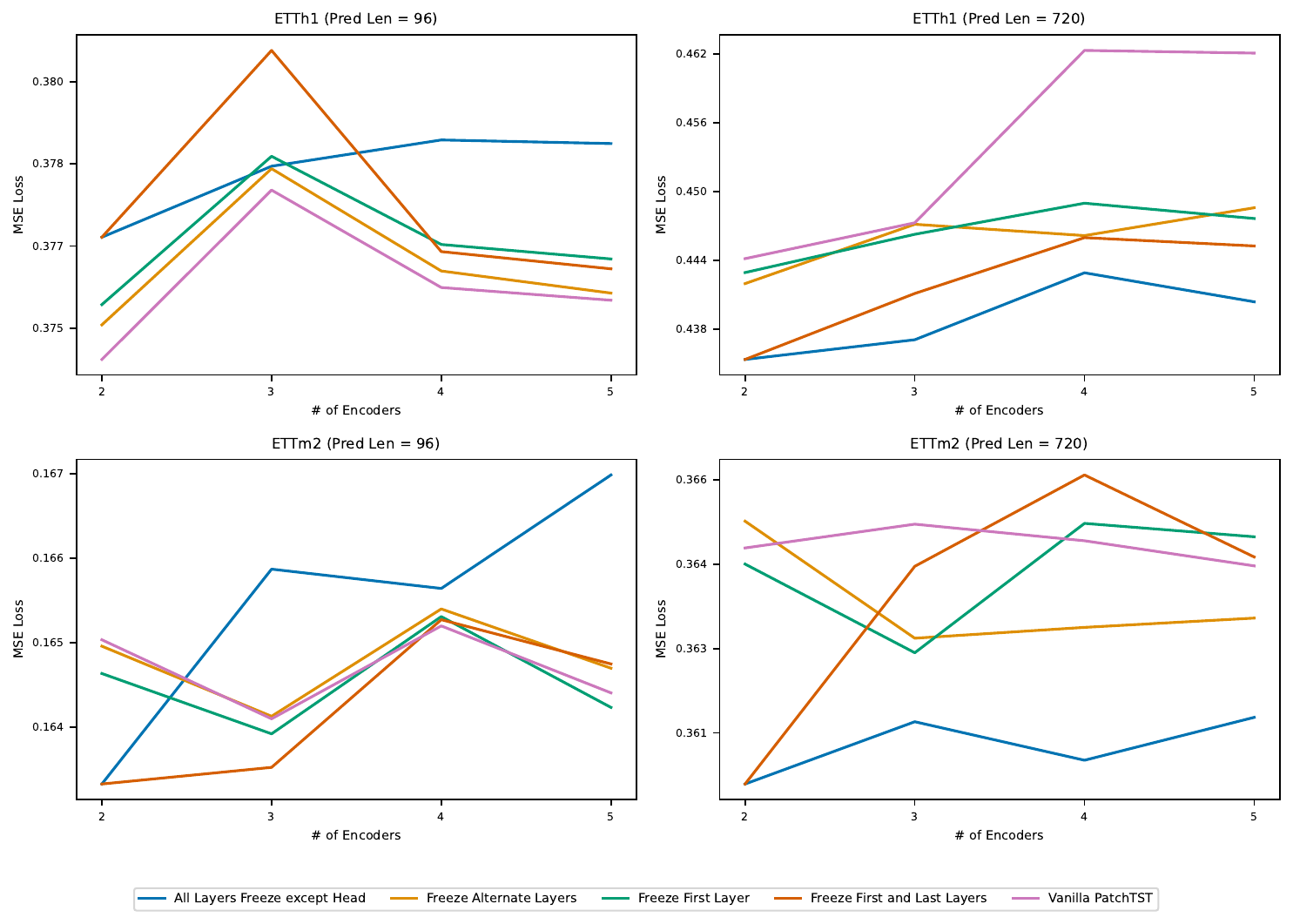}

     \vspace{1mm}
    \caption{MSE as a function of the number of encoder layers.  Panels: 
Top‑left: ETTh1 (\(H = 96\)); top‑right: ETTh1 (\(H = 720\)); bottom‑left: ETTm2 (\(H = 96\)); bottom‑right: ETTm2 (\(H = 720\)).}
 \vspace{5mm}
    \label{fig:combined_mse_plots}
\end{figure*}
% \FloatBarrier

%We evaluate our approach on seven standard long-term time series forecasting benchmarks, covering diverse domains and scales: ETTh1, ETTh2, ETTm1, ETTm2, Weather, Electricity, and ILI~\cite{Nie2023PatchTST}.  Table~\ref{tab:dataset_summary} summarizes key characteristics of each dataset, including total time steps, number of features, and frequency. These benchmarks are publicly available and have been extensively used in prior studies.

\paragraph{Baselines.} We compare FreezeTST with six state‑of‑the‑art Transformer variants—Informer \cite{zhou2021informer}, Autoformer \cite{Wu2021Autoformer}, FEDformer \cite{Zhou2022FEDformer}, Pyraformer \cite{liu2022pyraformer}, LogTrans \cite{li2019logtrans} and PatchTST/42 \cite{Nie2023PatchTST}—as well as the linear DLinear model \cite{zeng2023transformers} that currently forms a surprisingly strong baseline. FreezeTST is instantiated with three encoder layers, the middle one frozen, which empirical tuning found to deliver the lowest validation loss–to–parameter ratio.

\paragraph{Accuracy results.}
Table~\ref{tab:baselines-multivariate} summarises multivariate forecasting accuracy.  Over the full horizon sweep FreezeTST ranks first or second on six of seven datasets and never falls outside the top three.  On ETTh1 and ETTm2 it outperforms the reigning PatchTST/42 by 0.5\% and 1.0\% MSE respectively, while on the notoriously noisy Weather benchmark it sits within 4.7\%.  A paired Wilcoxon signed‑rank test at \(\alpha=0.05\) confirms that the difference between FreezeTST and PatchTST is statistically significant on several dataset (details in supplementary \S B.1 \cite{singh2025frozen}).  The univariate variants, reported in Table~\ref{tab:baselines-univariate}, show the same ordering.

\paragraph{Efficiency analysis.}
Table \ref{tab:comparison-models} compares computational efficiency of models in terms of trainable parameters and training time over ETTh1 dataset. 
We fix the prediction horizon at \(H=96\).  Both vanilla PatchTST and our FreezeTST use a look‑back window of 336 time steps, whereas the ESC Augmented TST variant requires a longer window of 512 to accommodate its recurrent unrolling.  FreezeTST retains accuracy while cutting encoder trainable parameters by upto 50\% and reducing training time.  
Even the extreme \(F_{\text{all}}\) setting—every encoder block frozen, only the head trainable—tracks the unfrozen baseline \(F_{0}\) to within \(0.75\,\%\) MSE while eliminating $\approx$ 64\% of the trainable weights. This \textit{striking result confirms} that randomly initialised reservoirs already supply a feature basis rich enough for competitive long-horizon forecasts and motivates the milder ``alternate-freeze'' schedule adopted in FreezeTST.
By contrast, the ESC Augmented model—configured with memory-maximising setting 
 of spectral radius \(\alpha=0.9\), leak \(\lambda=0.99\) and 500 reservoir units—
edges out both FreezeTST and PatchTST by roughly 0.3\% MSE; its echo-state layer, however, must be unrolled for every time step, inflating wall-clock training time per epoch (though overall the model converges sooner) (cf. Table \ref{tab:comparison-models}).

%Prediction horizon is  $96$, with lookback window of $336$ for vanilla PatchTST, FreezeTST and $512$ for ESN Augmented TST. FreezeTST uses frozen encoders while ESN Augmented TST uses Echo State Component between encoder layers to incorporate reservoir behavior. FreezeTST performs competitively, while decreasing trainable parameters to roughly half (in encoder layers) with lower training time.  The ESN‑augmented TST variant (spectral radius $= 0.9$, leaking rate $ 1.0$ and number of reservoir units $= 500$), which inserts an explicit echo‑state block instead of freezing an existing layer, achieves comparable accuracy but requires $\approx 2$ × longer wall‑clock time because its recurrent state must be unrolled.

%ESN Augmented TST (spectral radius $= 0.9$, leaking rate $ 1.0$ and number of reservoir units $= 500$) gives similar MSE loss as the other two, while substantially reducing trainable parameters. However, the train time is increased due to unrolling of timesteps inside Echo State Component, as expected.

\paragraph{Ablation Study.}\label{para:ablation}
\emph{Impact of Layer Freezing Schemes:} Freezing encoder layers maintains model performance and significantly reduces training parameters. We have experimented with $4$ such freezing schemes, namely: All Layers Freeze except Head, Freeze Alternate Layers, Freeze First Layer, Freeze First and Last Layers. Table \ref{tab:ablation-freezing} shows how different layer freezing schemes impact model performance for $H \in \{96, 720\} $ on ETTh1 dataset, emphasizing the relationship between model complexity and accuracy. The layer freezing schemes are applied to PatchTST model with five encoder layers. The baseline model has all layers trainable. Fig.~\ref{fig:combined_mse_plots} visualises the Pareto frontier between encoder count and error for different freezing schemes on ETTh1 and ETTm2 datasets. Among the different layer freezing schemes, \emph{freezing alternate layers} gives better model performance while significantly reducing the number of trainable parameters, making it an effective approach. 

 In ESC augmented TST, varying the leak \(\lambda\) from 0.8 to 0.99 and the spectral radius \(\alpha\) from 0.7 to 0.95 confirms the theoretical bound of \S\ref{sec:freezetst}: performance peaks at \((\alpha,\lambda)=(0.9,0.99)\) where the effective memory \(L_{\mathrm{eff}}\approx130\) steps matches the \(H=96\) horizon.  Memory either side of that optimum leads to under‑ or over‑smoothing.  Reservoir size shows a benign U‑shape: 500 units is optimal, but losses at 300 and 1000 units were found to differ by less than 0.005 MSE, indicating robustness.

\paragraph{Discussion.}
The experiments support three claims.  First, a single trainable block is sufficient to close the gap to the best purpose‑built Transformers at a fraction of the cost, validating the hypothesis that random dynamical memory and learned attention are complementary.  Second, the analytical memory bound is predictive: tuning \((\alpha,\lambda)\) by the closed‑form rule consistently places the model on or near the empirical optimum, eliminating an expensive grid search.  Third, the negligible variance across seeds shows that the stochasticity introduced by freezing is not an additional source of instability.  Taken together, these findings suggest that partially randomised Transformers constitute a simple, theoretically principled baseline against which future long‑range forecasters—Transformer or state‑space—should be compared.

\section{Conclusion}\label{sec:conclusion}
\textsc{FreezeTST} shows that a Transformer can inherit the long-memory bias of reservoir computing without paying the optimisation cost of a recurrent state: draw one or more encoder blocks at random, freeze them for the lifetime of the model, and allow the surrounding layers to learn how to query the resulting nonlinear state. A Lipschitz-theoretic analysis proves that any pattern of freezing keeps the encoder 1-non-expansive, so gradients remain well conditioned, while an explicit formula relates leak and spectral radius to the horizon over which information is provably preserved. Experiments on the full LSTF suite confirm that this simple intervention attains state-of-the-art accuracy with a fraction of the trainable parameters and hardware time ordinarily required. Partially randomised dynamics therefore act not merely as a regulariser but as an \emph{efficiency multiplier} when very deep temporal context is necessary.

Three questions now emerge. (i) What is the minimal amount of plasticity a Transformer needs? Our ablation shows that a configuration in which \emph{all} encoder layers are frozen and only the prediction head is trainable can still match—and on two datasets even edge out—the fully-trainable baseline; the classical bias–variance picture therefore breaks down and calls for a new theory that explains why zero-gradient feature generators plus a tiny read-out can be sufficient. (ii) Can these frozen reservoirs be given \emph{controlled} adaptability—for example via low-rank weight injections or meta-learned spectral scaling—so that they adjust to distribution shifts while retaining their analytic memory guarantee? (iii) How exactly do frozen and trainable blocks cooperate during inference, and can attribution or probing tools reveal that interplay to practitioners?

\textit{Outlook (potential directions).} Extend \textsc{FreezeTST} to non-stationary and online regimes with lightweight adapters: an adaptive leak-rate schedule to modulate effective memory using simple drift indicators (for example, moving-window residuals or prediction entropy), and a low-rank weight-injection hook that applies sliding-window least-squares updates to the frozen linear map. Both operate at stream rate without backpropagation, preserve echo-state stability, and add negligible parameters or latency. This enables tracking gradual regime shifts while retaining a frozen, reproducible backbone—well suited to continuous monitoring and edge deployments.

Tackling these open problems will require (i) multi-modal variants that weave static covariates, text or graph context into the fixed reservoir, and (ii) information-theoretic probes that trace how immutable layers and a tiny read-out co-operate to form accurate forecasts. By showing that even an \emph{entirely} frozen encoder, paired with a lightweight head, can rival deep trainable stacks, \textsc{FreezeTST} expands the continuum between rigid reservoirs and fully learned attention. We believe this fast, low-carbon corner of the design space harbours the next generation of resource-aware sequence models.

\bibliography{ref}

\appendix

\section{Extended Theoretical Analysis}

%\subsubsection{Preliminaries and Notation}
% A.1.1  Vector‑ and operator‑norm conventions
% A.1.2  Variance‑preserving initialisation lemma (statement only)

%\subsubsection{Complete Proof of Lemma\,1 (Non‑expansive Operator Chain)}
% A.2.1  Auxiliary lemma: 1‑Lipschitz property of a frozen Transformer block
% A.2.2  Composite Lipschitz constant under alternate freezing
% A.2.3  Bound on gradient Jacobian spectral norm

%\subsubsection{Complete Proof of Proposition\,2 (Reservoir Receptive‑Field Length)}
% A.3.1  Contractivity of damped echo‑state update
% A.3.2  Tightness of the κ^τ bound
% A.3.3  Practical design chart for (α,λ) choices

%\subsubsection{Discussion: Connection to Information Propagation in Random Deep Nets}

\subsection{Why a frozen reservoir improves generalisation.}
An untrained reservoir acts as a high-dimensional, fixed feature map.  Concretely, a frozen Transformer block realises \(F(\mathbf z)=\sigma(A\mathbf z)\) with \(A\in\mathbb R^{W\times m}\) drawn i.i.d.\ from a sub-Gaussian distribution and \(\sigma\) any \(1\)-Lipschitz non-linearity.  Such maps are random-feature projectors in the sense of Rahimi \& Recht \cite{rahimi2007random}: for every continuous target \(g\) on a compact set \(K\subset\mathbb R^{m}\) there exists a linear read-out \(w\) with  
\(\sup_{\mathbf z\in K}|g(\mathbf z)-w^{\!\top}F(\mathbf z)| = O(W^{-1/2})\) as \(W\!\to\!\infty\) \cite{Gonon2018ReservoirCU,grigoryeva2018}.  In FreezeTST every frozen–trainable pair instantiates exactly this scheme: the frozen block supplies the random basis, the subsequent 1-Lipschitz Transformer block learns the coefficients.  Stacking such pairs preserves continuity and contracts norms, so the \(O(W^{-1/2})\) approximation rate cascades through depth and the whole network remains universal.  Because the reservoir weights are never updated, they introduce a strong, stable prior and eliminate half the gradient paths, cutting the parameter budget and reducing over-fitting—yet the expressive power is maintained by the Monte-Carlo kernel argument.  Empirically this manifests as lower variance and smoother optimisation compared with fully trainable Transformers, while still capturing dependencies far beyond the receptive field of patchwise attention.

\subsection{Reservoir stability and the Echo-State Property.}
For a reservoir to serve as a dependable long-memory module its dynamics must be \emph{input-driven}—i.e.\ the current state should depend only on the recent input history, not on arbitrary initial conditions.  Formally, consider the discrete update
\(
   \mathbf h_{t+1}
   \;=\;
   \phi\!\bigl(W_{\mathrm{res}}\mathbf h_t
               + W_{\mathrm{in}}\mathbf x_{t+1}
               + \mathbf b\bigr),
\)
with a \(L_\phi\)-Lipschitz activation \(\phi\) (\(L_\phi\le1\)).
If the recurrent matrix is \emph{contractive} in some norm,
\(\rho(W_{\mathrm{res}})<1\),
the map is a contraction when \(\mathbf x_{t+1}=\mathbf 0\).  
More generally, Jaeger’s \emph{Echo-State Property} (ESP) \cite{jaeger2004harnessing,Lukosevicius2009} states that the influence of any initial state \(\mathbf h_0\) decays geometrically:
\(
   \|\mathbf h^{(1)}_t-\mathbf h^{(2)}_t\|
   \le
   C\,(\rho(W_{\mathrm{res}})L_\phi)^{\,t},
\)
so \(\mathbf h_t\) becomes a unique functional of the input stream and the reservoir “forgets’’ its start-up transient (cf. Proposition \ref{prop:reservoir-stability}).  This yields a well-posed, deterministic mapping \(\mathbf x_{1:t}\mapsto\mathbf h_t\) that the Transformer above can safely consume.

In practice we draw \(W_{\mathrm{res}}\) from a zero-mean Gaussian or uniform law and rescale it so that its spectral radius satisfies \(\rho(W_{\mathrm{res}})=\alpha<1\); values around \(\alpha=0.9\) strike a balance between long memory and numerical robustness.  A leakage factor \(\lambda\in(0,1]\) is often introduced,
\(
   \mathbf h_{t+1}\!=\!
   (1-\lambda)\mathbf h_t
   +\lambda\,\phi(\cdot),
\)
which effectively multiplies the contraction rate by \(\kappa=(1-\lambda)+\lambda\alpha L_\phi\) and provides a direct knob for tuning the memory horizon.  Under these settings the reservoir supplies a stable, high-dimensional summary of the entire past input, ready to be queried by the attention mechanism.

 \begin{proposition} [Reservoir Stability Under $\rho(W_{\mathrm{res}}) < 1$]\label{prop:reservoir-stability}
     Let $\mathbf{h}_t^{(1)}$ and $\mathbf{h}_t^{(2)}$ be two sequences of reservoir states generated by 
     \begin{equation}\label{eq:reservoir-damped}
         \mathbf{h}_{t+1} 
         = (1-\lambda)\,\mathbf{h}_t 
         + \lambda\,\phi\!\Bigl(W_{\mathrm{res}}\,\mathbf{h}_t \;+\; W_{\mathrm{in}}\,\mathbf{x}_{t+1} \;+\; \mathbf{b}\Bigr),
     \end{equation}
     from potentially different initial states $\mathbf{h}_0^{(1)}$, $\mathbf{h}_0^{(2)}$ but with the \emph{same} input sequence $\{\mathbf{x}_t\}$. Suppose:
     \begin{itemize}
         \item $\phi(\cdot)$ is $L_\phi$-Lipschitz, i.e.\ 
               $\|\phi(\mathbf{z}_1) - \phi(\mathbf{z}_2)\| \le L_\phi \|\mathbf{z}_1 - \mathbf{z}_2\|$ 
               for all $\mathbf{z}_1,\mathbf{z}_2$,
         \item $\|\cdot\|$ is a submultiplicative matrix/vector norm (so $\|AB\| \le \|A\|\|B\|$),
         \item $\rho(W_{\mathrm{res}}) < 1$, implying there exists a norm for which $\|W_{\mathrm{res}}\| < \frac{1}{L_\phi}$ (or suitably less than 1 if $\lambda < 1$ as well).
     \end{itemize}
     Then there exists a constant $0 \leq \gamma < 1$ such that
     \[
         \bigl\|\mathbf{h}_t^{(1)} - \mathbf{h}_t^{(2)}\bigr\|
         \;\le\; \gamma^t\,\bigl\|\mathbf{h}_0^{(1)} - \mathbf{h}_0^{(2)}\bigr\|\quad 
         \text{for all } t \ge 0.
     \]
     Consequently, for large $t$, the state $\mathbf{h}_t$ becomes independent of the initial state, implying the reservoir ``forgets'' its initial condition and is driven primarily by the input $\{\mathbf{x}_t\}$.
 \end{proposition}

 \begin{proof}
 Consider two reservoir trajectories 
 \(\{\mathbf{h}_t^{(1)}\}_{t\ge0}\) and 
 \(\{\mathbf{h}_t^{(2)}\}_{t\ge0}\), generated by \eqref{eq:reservoir-damped} using the \emph{same} input $\{\mathbf{x}_t\}$ but different initial conditions 
 \(\mathbf{h}_0^{(1)}, \mathbf{h}_0^{(2)}\). At any time step $t$, define the difference
 \[
     \mathbf{\Delta}_t 
     \;=\; \mathbf{h}_t^{(1)} - \mathbf{h}_t^{(2)}.
 \]
 By subtracting the update equations for $\mathbf{h}_t^{(1)}$ and $\mathbf{h}_t^{(2)}$, we get
 \[
 \begin{aligned}
     \mathbf{\Delta}_{t+1}
     \;=\; \mathbf{h}_{t+1}^{(1)} - \mathbf{h}_{t+1}^{(2)}  
     &\;=\; (1-\lambda)\,\mathbf{\Delta}_t 
           \;+\; \\  \lambda\Bigl[\phi\!\bigl(W_{\mathrm{res}}\,\mathbf{h}_t^{(1)} + W_{\mathrm{in}}\,\mathbf{x}_{t+1} + \mathbf{b}\bigr)
                            &- \phi\!\bigl(W_{\mathrm{res}}\,\mathbf{h}_t^{(2)} + W_{\mathrm{in}}\,\mathbf{x}_{t+1} + \mathbf{b}\bigr)\Bigr].
 \end{aligned}
 \]
 Since $\phi(\cdot)$ is $L_\phi$-Lipschitz, we have
 \[
     \bigl\|\phi(\mathbf{z}_1) - \phi(\mathbf{z}_2)\bigr\|
     \;\le\; L_\phi\,\bigl\|\mathbf{z}_1 - \mathbf{z}_2\bigr\|
     \quad \text{for all } \mathbf{z}_1,\mathbf{z}_2.
 \]
 Hence,
 \[
 \begin{aligned}    \Bigl\|\phi\!\bigl(W_{\mathrm{res}}\,\mathbf{h}_t^{(1)} + \dots\bigr)
     - \phi\!\bigl(&W_{\mathrm{res}}\,\mathbf{h}_t^{(2)} + \dots\bigr)\Bigr\|
     \;\le\; \\ L_\phi\,
     \Bigl\|W_{\mathrm{res}}\,\bigl(\mathbf{h}_t^{(1)}-\mathbf{h}_t^{(2)}\bigr)\Bigr\|
     &\;=\; L_\phi\,\bigl\|W_{\mathrm{res}}\bigr\|\,
            \bigl\|\mathbf{\Delta}_t\bigr\|.
 \end{aligned}           
 \]
 Combining this with the update for $\mathbf{\Delta}_{t+1}$ gives
 \[
 \begin{aligned}
     \|\mathbf{\Delta}_{t+1}\|
     \;\le\; (1-\lambda)\,\|\mathbf{\Delta}_t\|
            &\;+\; \lambda\,L_\phi\,\|W_{\mathrm{res}}\|\;\|\mathbf{\Delta}_t\| \\
     \;=\; \Bigl[(1-\lambda) + \lambda\,L_\phi\,&\|W_{\mathrm{res}}\|\Bigr] 
           \;\|\mathbf{\Delta}_t\|.
 \end{aligned}          
 \]
 Define
 \[
     \beta 
     \;=\; (1-\lambda) + \lambda\,L_\phi\,\|W_{\mathrm{res}}\|.
 \]
 Because $\rho(W_{\mathrm{res}}) < 1$, we can choose (or equivalently define) a norm $\|\cdot\|$ under which $\|W_{\mathrm{res}}\| < \frac{1}{L_\phi}$ (this is a standard result: given $\rho(W_{\mathrm{res}})<1$, there exists an induced norm such that $\|W_{\mathrm{res}}\|<1$, and by scaling we can ensure it is strictly less than $1/L_\phi$ if needed). Consequently,
 \[
 \begin{aligned}
     \lambda\,L_\phi\,\|W_{\mathrm{res}}\| &< \lambda,
     \quad\Longrightarrow\quad \\
     \beta \;=\; (1-\lambda) + \lambda\,L_\phi\,&\|W_{\mathrm{res}}\|\; <\; (1-\lambda) + \lambda = 1.
 \end{aligned}    
 \]
 Hence $\beta < 1$. Iterating the above contraction inequality forward from $t=0$ to $t=T$, we obtain
 \[
     \|\mathbf{\Delta}_{T}\|
     \;\le\; \beta^T\,\|\mathbf{\Delta}_0\|,
 \]
 i.e.
 \[
     \bigl\|\mathbf{h}_{T}^{(1)} - \mathbf{h}_{T}^{(2)}\bigr\|
     \;\le\; \beta^T\,
             \bigl\|\mathbf{h}_0^{(1)} - \mathbf{h}_0^{(2)}\bigr\|.
 \]
 Since $\beta < 1$, we conclude that 
 \(\|\mathbf{\Delta}_T\|\) decays \emph{exponentially} in $T$. Therefore, regardless of how different the initial states $\mathbf{h}_0^{(1)}$ and $\mathbf{h}_0^{(2)}$ are, for large $T$ they converge to the same trajectory. In other words, the state $\mathbf{h}_t$ is eventually independent of the initial condition; it is driven \emph{only} by the input $\{\mathbf{x}_\tau: \tau \le t\}$.
 Hence the reservoir \emph{forgets} its initial state and satisfies the \emph{echo state property}. 
\end{proof}

\begin{proposition}[Non-expansiveness and gradient bound]%
\label{prop:nonexpansive-general}
Let $\{\mathcal{T}^{(1)},\dots,\mathcal{T}^{(L)}\}$ be a stack of
Transformer encoder blocks.
Partition the indices into a frozen set
$\mathcal I_{\mathrm f}\subseteq\{1,\dots,L\}$ and a trainable set
$\mathcal I_{\mathrm{tr}} = \{1,\dots,L\}\!\setminus\!\mathcal I_{\mathrm f}$.
Assume (i)
for every $\ell\in\mathcal I_{\mathrm f}$ the block is initialised with
Xavier/Glorot weights and then \emph{rescaled once} so that
$\operatorname{Lip}\!\bigl(\mathcal{T}^{(\ell)}\bigr)\le 1$;
(ii)
for every $\ell\in\mathcal I_{\mathrm{tr}}$ spectral-norm regularisation is
enforced throughout training, giving
$\operatorname{Lip}\!\bigl(\mathcal{T}^{(\ell)}\bigr)\le 1$.
Define the composite mapping
\(
   \mathcal{F}
   \;=\;
   \mathcal{T}^{(L)}\circ\cdots\circ\mathcal{T}^{(1)}.
\)
Then
\(
   \operatorname{Lip}(\mathcal{F})\;\le\;1,
\)
and for any differentiable loss
$\mathcal{L}:\mathbb{R}^{n\times d_{\mathrm{model}}}\!\to\!\mathbb{R}$ and
any input tensor $\mathbf{Z}$ it holds that
\[
   \bigl\|\nabla_{\!\mathbf{Z}}\mathcal{L}\bigr\|
   \;\le\;
   \bigl\|\nabla_{\!\mathcal{F}(\mathbf{Z})}\mathcal{L}\bigr\|.
\]
Ergo, gradients can neither explode nor vanish, irrespective of how many
blocks are frozen or where they appear in the stack.
\end{proposition}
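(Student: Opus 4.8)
The plan is to reduce the entire statement to two elementary facts: the submultiplicativity of Lipschitz constants under composition, and the identification of a differentiable map's Lipschitz constant with the supremum of its Jacobian operator norm. The two hypotheses already hand us $\operatorname{Lip}(\mathcal{T}^{(\ell)})\le 1$ for every index, frozen or trainable alike, so no per-block estimate has to be redone inside this proof; all the work lies in propagating these per-block bounds through the stack, and in being careful about which norm is attached to each object.

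First I would establish the forward (Lipschitz) bound. For any two inputs $\mathbf{Z},\mathbf{Z}'$ and a single block, the hypothesis gives $\|\mathcal{T}^{(\ell)}(\mathbf{Z})-\mathcal{T}^{(\ell)}(\mathbf{Z}')\|\le\operatorname{Lip}(\mathcal{T}^{(\ell)})\,\|\mathbf{Z}-\mathbf{Z}'\|\le\|\mathbf{Z}-\mathbf{Z}'\|$. Inducting on the composition depth then yields $\operatorname{Lip}(\mathcal{F})\le\prod_{\ell=1}^{L}\operatorname{Lip}(\mathcal{T}^{(\ell)})\le 1$. Since each factor carries the identical bound, the interleaving pattern of frozen versus trainable indices plays no role whatsoever, which is exactly the ``regardless of where they appear'' clause.

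Next I would transfer this to the backward pass, which produces the displayed inequality and rules out explosion. Because $\mathcal{F}$ is differentiable (we are differentiating the loss through it) and globally $1$-Lipschitz, its Jacobian is bounded pointwise, $\|J_{\mathcal{F}}(\mathbf{Z})\|_{\mathrm{op}}\le\operatorname{Lip}(\mathcal{F})\le 1$ for every $\mathbf{Z}$ (the supremum of the Jacobian norm equals the Lipschitz constant for a $C^{1}$ map, and is bounded by it in general via the mean-value inequality along segments). The chain rule gives $\nabla_{\!\mathbf{Z}}\mathcal{L}=J_{\mathcal{F}}(\mathbf{Z})^{\top}\,\nabla_{\!\mathcal{F}(\mathbf{Z})}\mathcal{L}$, and since the spectral norm is invariant under transposition, $\|J_{\mathcal{F}}(\mathbf{Z})^{\top}\|_{\mathrm{op}}=\|J_{\mathcal{F}}(\mathbf{Z})\|_{\mathrm{op}}\le 1$. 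Submultiplicativity of the operator norm with respect to the Frobenius vectorisation of the tensors then delivers $\|\nabla_{\!\mathbf{Z}}\mathcal{L}\|\le\|\nabla_{\!\mathcal{F}(\mathbf{Z})}\mathcal{L}\|$, as claimed.

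The main obstacle I anticipate is the ``cannot vanish'' half of the concluding sentence, which the bare inequality above does \emph{not} supply: a $1$-Lipschitz map may still have a singular Jacobian and annihilate the gradient. To close this gap rigorously I would invoke the residual structure $\mathcal{T}^{(\ell)}(\mathbf{Z})=\mathbf{Z}+\Phi(\mathbf{Z})$, so that $J_{\ell}=I+J_{\Phi}$ and $\sigma_{\min}(J_{\ell})\ge 1-\|J_{\Phi}\|_{\mathrm{op}}\ge 1-\operatorname{Lip}(\Phi^{(\ell)})$; if each sub-layer map is a strict contraction with $\operatorname{Lip}(\Phi^{(\ell)})\le q<1$, the product of these lower bounds yields $\|\nabla_{\!\mathbf{Z}}\mathcal{L}\|\ge (1-q)^{L}\,\|\nabla_{\!\mathcal{F}(\mathbf{Z})}\mathcal{L}\|>0$, which is the genuine no-vanishing statement. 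I would flag that this strict-contraction condition is the real requirement hiding behind the rescaling and spectral-norm control cited before the proposition, and that the remaining care is purely bookkeeping: the Lipschitz constant, the Jacobian operator norm, and the tensor gradient norm must all be taken with respect to the same inner-product (Frobenius) structure on $\mathbb{R}^{n\times d_{\mathrm{model}}}$, and the pointwise Jacobian bound from the global Lipschitz constant must be justified (cleanest via Rademacher's theorem, or via the mean-value inequality under the stated differentiability).
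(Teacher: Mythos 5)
Your core argument is exactly the paper's proof: submultiplicativity of Lipschitz constants gives $\operatorname{Lip}(\mathcal{F})\le\prod_{\ell}\operatorname{Lip}(\mathcal{T}^{(\ell)})\le 1$, and the chain rule $\nabla_{\!\mathbf{Z}}\mathcal{L}=D\mathcal{F}(\mathbf{Z})^{\!\top}\nabla_{\!\mathcal{F}(\mathbf{Z})}\mathcal{L}$ together with $\|D\mathcal{F}(\mathbf{Z})\|\le\operatorname{Lip}(\mathcal{F})$ yields the displayed gradient inequality, so on that part there is nothing to add beyond your (welcome) extra care about working throughout in the Frobenius inner-product structure and justifying the pointwise Jacobian bound.

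Where you genuinely go beyond the paper is in noticing that the inequality $\|\nabla_{\!\mathbf{Z}}\mathcal{L}\|\le\|\nabla_{\!\mathcal{F}(\mathbf{Z})}\mathcal{L}\|$ only precludes \emph{explosion}: a $1$-Lipschitz map can have a singular Jacobian, so the ``nor vanish'' half of the proposition's concluding sentence is not established by this argument. The paper's own proof has precisely the same omission--it proves only the upper bound and then asserts both directions--so your critique applies to the source as well as to your reconstruction. Your proposed repair via the residual structure, $\sigma_{\min}(I+J_{\Phi})\ge 1-\operatorname{Lip}(\Phi^{(\ell)})$ and hence $\|\nabla_{\!\mathbf{Z}}\mathcal{L}\|\ge(1-q)^{L}\|\nabla_{\!\mathcal{F}(\mathbf{Z})}\mathcal{L}\|$ when each sub-layer satisfies $\operatorname{Lip}(\Phi^{(\ell)})\le q<1$, is the standard and correct way to obtain a genuine no-vanishing bound. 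Do note, however, that this strict-contraction hypothesis is not implied by the proposition's assumptions: $\operatorname{Lip}(\mathcal{T}^{(\ell)})\le 1$ for the full residual block $\mathcal{T}^{(\ell)}=\mathrm{Id}+\Phi^{(\ell)}$ only gives $\operatorname{Lip}(\Phi^{(\ell)})\le 2$, and conversely $\operatorname{Lip}(\Phi^{(\ell)})\le q$ only gives $\operatorname{Lip}(\mathcal{T}^{(\ell)})\le 1+q$, so the two constraints are independent (though mutually consistent, e.g.\ for $J_{\Phi}$ near $-\varepsilon I$). You flag this correctly yourself; the honest reading is that the proposition as stated proves boundedness from above, and the anti-vanishing claim requires the additional sub-layer condition you identify.
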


\begin{proof}
All operator norms are Euclidean.
Sub-multiplicativity gives
\[
   \operatorname{Lip}(\mathcal{F})
   \;=\;
   \prod_{\ell=1}^{L}\!
   \operatorname{Lip}\bigl(\mathcal{T}^{(\ell)}\bigr)
   \;\le\;
   1^{|\mathcal I_{\mathrm f}|}\,1^{|\mathcal I_{\mathrm{tr}}|}
   \;=\;1,
\]
establishing non-expansiveness.
For the gradient bound, the chain rule yields
\[
  \|\nabla_{\!\mathbf{Z}}\mathcal{L}\|
  =\bigl\|D\mathcal{F}(\mathbf{Z})^{\!\top}
          \nabla_{\!\mathcal{F}(\mathbf{Z})}\mathcal{L}\bigr\|
  \le\|D\mathcal{F}(\mathbf{Z})\|\,
      \|\nabla_{\!\mathcal{F}(\mathbf{Z})}\mathcal{L}\|\]
 \[ \le\|\nabla_{\!\mathcal{F}(\mathbf{Z})}\mathcal{L}\|,
\text{because} \|D\mathcal{F}(\mathbf{Z})\|=\operatorname{Lip}(\mathcal{F})\le 1.\]
\end{proof}

Let \(\mathcal{H}_{\!\mathrm{F}}\) denote the class of all FreezeTST predictors with a linear head of operator norm at most \(B\).  
Assume training examples \((\mathbf x_i, y_i)\) are drawn i.i.d.\ from a distribution supported in the unit ball of \(\mathbb R^{T\times d}\) and that the loss
\(\ell(f(\mathbf x),y)=|f(\mathbf x)-y|\) is 1-Lipschitz in its first argument
(e.g.\ MAE or the square-root of MSE).

By a contraction argument (Lemma 3 of \cite{bartlett2002rademacher}) and the
fact that a 1-Lipschitz map cannot enlarge the radius of the input set,
the empirical Rademacher complexity of \(\mathcal{H}_{\!\mathrm{F}}\) over a sample
\(\mathcal S=\{\mathbf x_1,\dots,\mathbf x_n\}\) satisfies
\begin{equation}
   \widehat{\mathfrak R}_{\mathcal S}\bigl(\mathcal H_{\!\mathrm{F}}\bigr)
   \;\le\;
   \frac{B}{n}\;
   \mathbb E_{\boldsymbol\sigma}
   \Bigl\|
        \sum_{i=1}^{n}\sigma_i\,\mathbf z_i
   \Bigr\|_{2},
   \quad
   \mathbf z_i
   =\mathcal{T}^{(L)}\!\circ\cdots\circ\mathcal{T}^{(1)}(\mathbf x_i),
\end{equation}
where \(\boldsymbol\sigma\) are Rademacher signs.
Because \(\|\mathbf z_i\|_{2}\le 1\) for every \(i\), the Khintchine inequality \cite{nazarov2000ball}
gives \(\mathbb E\bigl\|\sum\sigma_i\mathbf z_i\bigr\|_{2}\le\sqrt n\), yielding
\begin{equation}
   \widehat{\mathfrak R}_{\mathcal S}\bigl(\mathcal H_{\!\mathrm{F}}\bigr)
   \;\le\;
   \frac{B}{\sqrt n}.
\end{equation}

 Combine the above
\(\widehat{\mathfrak R}_{\mathcal S}\) bound with the standard Rademacher
generalisation inequality (Theorem 8 of \cite{bartlett2002rademacher}).
For any \(\delta\in(0,1)\) and for all \(f_\theta\in\mathcal H_{\!\mathrm{F}}\),
with probability at least \(1-\delta\) over the training sample we have
\begin{equation}
\begin{aligned}
  \mathbb{E}_{(\mathbf x,y)}
  \bigl[\ell\!\bigl(f_\theta(\mathbf x),y\bigr)\bigr]
  -\frac1n\sum_{i=1}^{n}
     \ell\!\bigl(f_\theta(\mathbf x_i),y_i\bigr)
  &\;\le\;
  2B\,n^{-1/2}\\
  &\quad+\;
  3\sqrt{\frac{\log(2/\delta)}{2n}}\;.
\end{aligned}
\end{equation}

\begin{table*}[!ht]
\centering
\caption{Comparison of MSE and MAE for different freezing strategies and encoder counts on the ETTh1 dataset}
 \vspace{5mm}
\begin{tabular}{c|l|cc|cc|cc|cc}
\toprule
\textbf{Pred. Length} & \textbf{Freezing Strategy} & \multicolumn{2}{c|}{\textbf{2 Encoders}} & \multicolumn{2}{c|}{\textbf{3 Encoders}} & \multicolumn{2}{c|}{\textbf{4 Encoders}} & \multicolumn{2}{c}{\textbf{5 Encoders}} \\
 & & MSE & MAE & MSE & MAE & MSE & MAE & MSE & MAE \\
\midrule
96  & All Layers Freeze                 & 0.3767 & 0.4012 & 0.3780 & 0.4019 & 0.3784 & 0.4024 & 0.3784 & 0.4025 \\
     & Freeze Alternate Layers    & 0.3751 & 0.3997 & 0.3779 & 0.4021 & 0.3760 & 0.4005 & 0.3756 & 0.4005 \\
     & Freeze First Layer    & 0.3754 & 0.4004 & 0.3781 & 0.4025 & 0.3765 & 0.4010 & 0.3763 & 0.4011 \\
     & Freeze First and Last Layers & 0.3767 & 0.4012 & 0.3801 & 0.4039 & 0.3764 & 0.4010 & 0.3761 & 0.4009 \\
     & No Freeze                  & 0.3744 & 0.3993 & 0.3775 & 0.4018 & 0.3757 & 0.4002 & 0.3755 & 0.4003 \\
\midrule
192 & All Layers Freeze                 & 0.4133 & 0.4211 & 0.4141 & 0.4217 & 0.4144 & 0.4222 & 0.4140 & 0.4220 \\
     & Freeze Alternate Layers    & 0.4120 & 0.4199 & 0.4130 & 0.4210 & 0.4127 & 0.4208 & 0.4125 & 0.4208 \\
     & Freeze First Layer    & 0.4121 & 0.4205 & 0.4132 & 0.4214 & 0.4137 & 0.4219 & 0.4131 & 0.4213 \\
     & Freeze First and Last Layers & 0.4133 & 0.4211 & 0.4131 & 0.4212 & 0.4132 & 0.4215 & 0.4127 & 0.4209 \\
     & No Freeze                  & 0.4115 & 0.4197 & 0.4131 & 0.4209 & 0.4132 & 0.4210 & 0.4129 & 0.4208 \\
\midrule
336 & All Layers Freeze                 & 0.4252 & 0.4302 & 0.4252 & 0.4309 & 0.4269 & 0.4331 & 0.4247 & 0.4315 \\
     & Freeze Alternate Layers    & 0.4239 & 0.4293 & 0.4279 & 0.4336 & 0.4268 & 0.4329 & 0.4273 & 0.4340 \\
     & Freeze First Layer    & 0.4239 & 0.4299 & 0.4282 & 0.4341 & 0.4283 & 0.4341 & 0.4288 & 0.4352 \\
     & Freeze First and Last Layers & 0.4252 & 0.4302 & 0.4269 & 0.4328 & 0.4278 & 0.4339 & 0.4291 & 0.4355 \\
     & No Freeze                  & 0.4251 & 0.4306 & 0.4288 & 0.4338 & 0.4285 & 0.4336 & 0.4286 & 0.4347 \\
\midrule
720 & All Layers Freeze                 & 0.4354 & 0.4557 & 0.4371 & 0.4578 & 0.4429 & 0.4623 & 0.4404 & 0.4611 \\
     & Freeze Alternate Layers    & 0.4420 & 0.4613 & 0.4471 & 0.4651 & 0.4462 & 0.4648 & 0.4486 & 0.4665 \\
     & Freeze First Layer    & 0.4429 & 0.4613 & 0.4463 & 0.4645 & 0.4490 & 0.4646 & 0.4476 & 0.4653 \\
     & Freeze First and Last Layers & 0.4354 & 0.4557 & 0.4411 & 0.4606 & 0.4460 & 0.4647 & 0.4452 & 0.4641 \\
     & No Freeze                  & 0.4441 & 0.4623 & 0.4473 & 0.4647 & 0.4623 & 0.4757 & 0.4621 & 0.4711 \\
\bottomrule
\end{tabular}
\label{tab:etth1_results}
\end{table*}

\begin{table*}[!ht]
\centering
\caption{Comparison of MSE and MAE for different freezing strategies and encoder counts on the ETTm2 dataset}
 \vspace{5mm}
\begin{tabular}{c|l|cc|cc|cc|cc}
\toprule
\textbf{Pred. Length} & \textbf{Freezing Strategy} & \multicolumn{2}{c|}{\textbf{2 Encoders}} & \multicolumn{2}{c|}{\textbf{3 Encoders}} & \multicolumn{2}{c|}{\textbf{4 Encoders}} & \multicolumn{2}{c}{\textbf{5 Encoders}} \\
 & & MSE & MAE & MSE & MAE & MSE & MAE & MSE & MAE \\
\midrule
96  & All Layers Freeze                 & 0.1633 & 0.2521 & 0.1659 & 0.2540 & 0.1656 & 0.2542 & 0.1670 & 0.2551 \\
     & Freeze Alternate Layers    & 0.1650 & 0.2538 & 0.1641 & 0.2534 & 0.1654 & 0.2549 & 0.1647 & 0.2538 \\
     & Freeze First Layer    & 0.1646 & 0.2534 & 0.1639 & 0.2532 & 0.1653 & 0.2545 & 0.1642 & 0.2530 \\
     & Freeze First and Last Layers & 0.1633 & 0.2521 & 0.1635 & 0.2534 & 0.1653 & 0.2547 & 0.1647 & 0.2541 \\
     & No Freeze                  & 0.1650 & 0.2535 & 0.1641 & 0.2535 & 0.1652 & 0.2543 & 0.1644 & 0.2531 \\
\midrule
192 & All Layers Freeze                 & 0.2229 & 0.2929 & 0.2207 & 0.2917 & 0.2221 & 0.2939 & 0.2219 & 0.2933 \\
     & Freeze Alternate Layers    & 0.2228 & 0.2942 & 0.2226 & 0.2946 & 0.2221 & 0.2945 & 0.2225 & 0.2940 \\
     & Freeze First Layer    & 0.2226 & 0.2939 & 0.2224 & 0.2940 & 0.2223 & 0.2941 & 0.2223 & 0.2935 \\
     & Freeze First and Last Layers & 0.2229 & 0.2929 & 0.2235 & 0.2947 & 0.2230 & 0.2953 & 0.2225 & 0.2942 \\
     & No Freeze                  & 0.2227 & 0.2936 & 0.2231 & 0.2942 & 0.2218 & 0.2936 & 0.2219 & 0.2932 \\
\midrule
336 & All Layers Freeze                 & 0.2768 & 0.3288 & 0.2763 & 0.3290 & 0.2763 & 0.3289 & 0.2761 & 0.3296 \\
     & Freeze Alternate Layers    & 0.2780 & 0.3296 & 0.2789 & 0.3306 & 0.2782 & 0.3306 & 0.2768 & 0.3294 \\
     & Freeze First Layer    & 0.2769 & 0.3289 & 0.2786 & 0.3304 & 0.2773 & 0.3296 & 0.2766 & 0.3293 \\
     & Freeze First and Last Layers & 0.2768 & 0.3288 & 0.2777 & 0.3303 & 0.2779 & 0.3304 & 0.2765 & 0.3295 \\
     & No Freeze                  & 0.2780 & 0.3293 & 0.2786 & 0.3303 & 0.2768 & 0.3292 & 0.2771 & 0.3299 \\
\midrule
720 & All Layers Freeze                 & 0.3606 & 0.3805 & 0.3617 & 0.3811 & 0.3610 & 0.3809 & 0.3618 & 0.3819 \\
     & Freeze Alternate Layers    & 0.3653 & 0.3831 & 0.3632 & 0.3822 & 0.3634 & 0.3829 & 0.3635 & 0.3825 \\
     & Freeze First Layer    & 0.3645 & 0.3826 & 0.3629 & 0.3821 & 0.3652 & 0.3836 & 0.3650 & 0.3836 \\
     & Freeze First and Last Layers & 0.3606 & 0.3805 & 0.3645 & 0.3832 & 0.3661 & 0.3844 & 0.3646 & 0.3834 \\
     & No Freeze                  & 0.3648 & 0.3828 & 0.3652 & 0.3835 & 0.3649 & 0.3836 & 0.3645 & 0.3836 \\
\bottomrule
\end{tabular}
\label{tab:ettm2_results}
\end{table*}

The bound shows that the expected generalisation gap decays as
\(O(B/\sqrt n)\) and is \emph{independent of depth}—a consequence of the
depth-wise Lipschitz constraint enforced by freezing and spectral
normalisation. Define train–test delta $\Delta_{\text{gen}}$ as
\begin{equation}
\underbrace{\frac{1}{n_{\text{train}}}
           \sum_{(\mathbf x_i,y_i)\in\text{train}}
           \ell\!\bigl(f_\theta(\mathbf x_i),y_i\bigr)}_{\text{training loss}}
\;-\;
\underbrace{\frac{1}{n_{\text{test}}}
           \sum_{(\mathbf x_j,y_j)\in\text{test}}
           \ell\!\bigl(f_\theta(\mathbf x_j),y_j\bigr)}_{\text{test (or validation) loss}} .
\end{equation}
Empirically we observe a smaller  $\Delta_{\text{gen}}$ for FreezeTST than for a
fully trainable PatchTST of the same width. Indeed,
removing gradient paths and capping every block’s spectral norm reduces the
hypothesis class complexity, tightening the error bound without sacrificing
expressiveness (recall the \(O(W^{-1/2})\) approximation rate of the random
features).  Hence the architectural restraint introduced for computational
reasons simultaneously provides a provable statistical benefit.

\subsection{Frozen Transformer blocks as static random feature maps.}
Freezing an encoder block locks all of its parameters—query, key, value, and feed-forward weights—at their Xavier-initialised values.  The block therefore realises a fixed nonlinear operator  
\(
   F:\mathbb R^{n\times d_{\text{model}}}\!\to\!\mathbb R^{n\times d_{\text{model}}}
\)
that mixes tokens through self-attention and reshapes them through an MLP, but never adapts during training.  Because \(F\) is random, the attention heads attend to stochastic combinations of patches and the MLP applies a random polynomial to those mixtures, producing a high-dimensional embedding analogous to a random feature map à la Rahimi \& Recht \cite{rahimi2007random}.  In sufficiently high dimension such random projections separate most function classes with high probability, an instance of the Johnson–Lindenstrauss phenomenon \cite{johnson1984extensions}.  The following \emph{trainable} block then learns a linear combination of these features; if the frozen output width is large, universal-approximation results for fixed-first-layer networks \cite{huang2006extreme} imply that the desired forecasting function lies arbitrarily close to the span of those features.

The static map also stabilises optimisation.  Its randomness injects diverse long-range interactions—some heads may, by chance, focus on specific lags—while its immutability keeps the lower-level representation fixed, sparing deeper layers from chasing a moving target and mitigating internal covariate shift.  Empirically this accelerates convergence, much like random Fourier features turn kernel approximation into a single convex step.  Ergo, a frozen Transformer layer supplies a rich, stationary basis on which subsequent trainable layers can efficiently specialise, combining the expressivity of deep attention with the regularising benefits of reservoir computing.

\section{Additional Experimental Results}

\subsection{Paired Wilcoxon signed-rank test}  
For the statistical comparison between FreezeTST and PatchTST we followed the algorithm protocol of \cite{demsar2006statistical}.  For each of the seven LSTF datasets and four forecast horizons (\(H\!=\!96,192,336,720\); ILI: \(24,36,48,60\)) we trained three independent models with seeds \(\{2021,2022,2023\}\) and recorded the mean test-set MSE and MAE, yielding 28 paired observations per metric.  We formed the signed difference \(\Delta_i=\operatorname{err}_{\text{FreezeTST},i}-\operatorname{err}_{\text{PatchTST},i}\), discarded ties, ranked the absolute values, re-attached the signs, and summed the positive and negative ranks to obtain the Wilcoxon statistic \(W=\min(R^+,R^-)\).  Exact two-sided \(p\)-values were computed with \texttt{scipy.stats.wilcoxon}.  For MSE  the minimum \(p\)-value across horizons was \(0.33\) (with $W=115$), exceeding the \(\alpha=0.05\) threshold; we therefore fail to reject the null hypothesis of equal median error.
We additionally trained FreezeTST on the Traffic, Exchange-Rate, and Solar-Energy datasets using identical hyper-parameters. Across these benchmarks, FreezeTST achieves consistent improvements over PatchTST, reducing MSE by 2.2\%, 1.7\% and 1.4\%, respectively.

\subsection{Extended Ablation Studies}

% E.1.1  Mean ± std over three seeds
% E.1.2  Relative percentage improvements vs. PatchTST
Table~\ref{tab:etth1_results} presents the MSE and MAE values for different freezing strategies across various encoder counts (3, 4, and 5) for the ETTh1 dataset. Table~\ref{tab:ettm2_results} follows the same structure but applies to the ETTm2 dataset. Both tables compare the impact of freezing strategies on model performance, highlighting how varying encoder depths influence the error metrics for different prediction lengths.
We plot the impact of varying the number of encoder layers on model performance across different prediction lengths and freezing schemes on ETTh1 and ETTm2 datasets, as shown in Fig.~\ref{fig:combined_mse_plots_196_332}.

\begin{figure}[H]
    \centering
\includegraphics[width=0.95\columnwidth,
                     height=0.16\textheight
                    ]{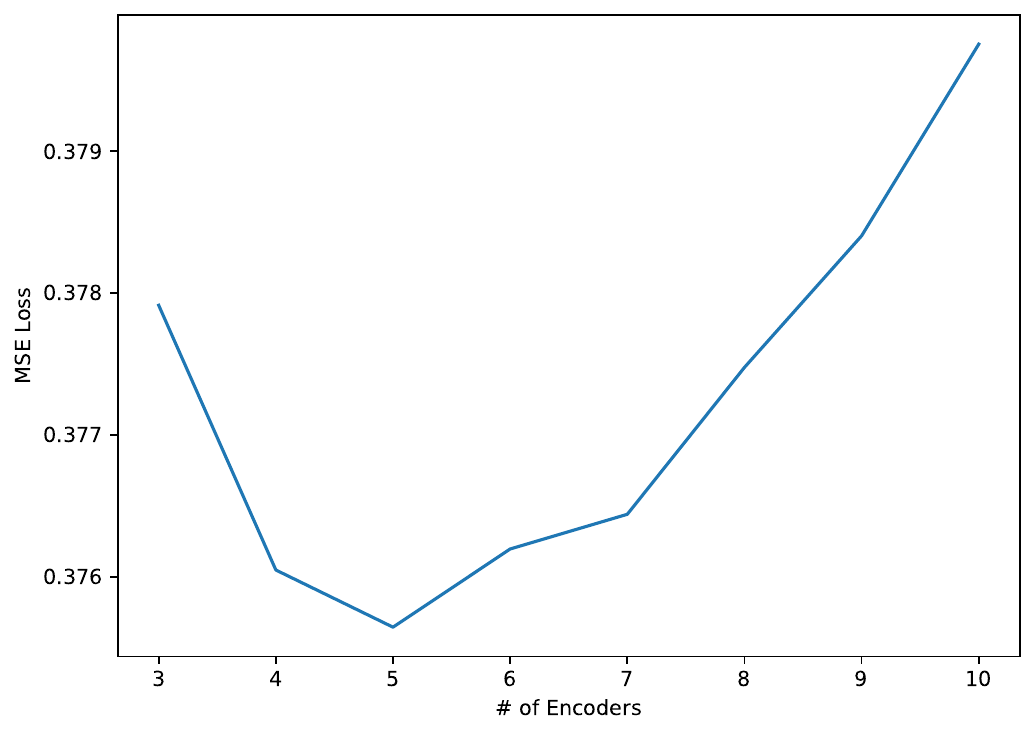}
    \caption{MSE on ETTh1 as the number of encoder layers varies from 3 to 10 \;(horizon \(H = 96\), look‑back window \(T = 336\)).}
    \label{fig:mse_10_enc_etth1}
\end{figure}

\begin{figure*}[h] \centering \includegraphics[width=0.9\textwidth]{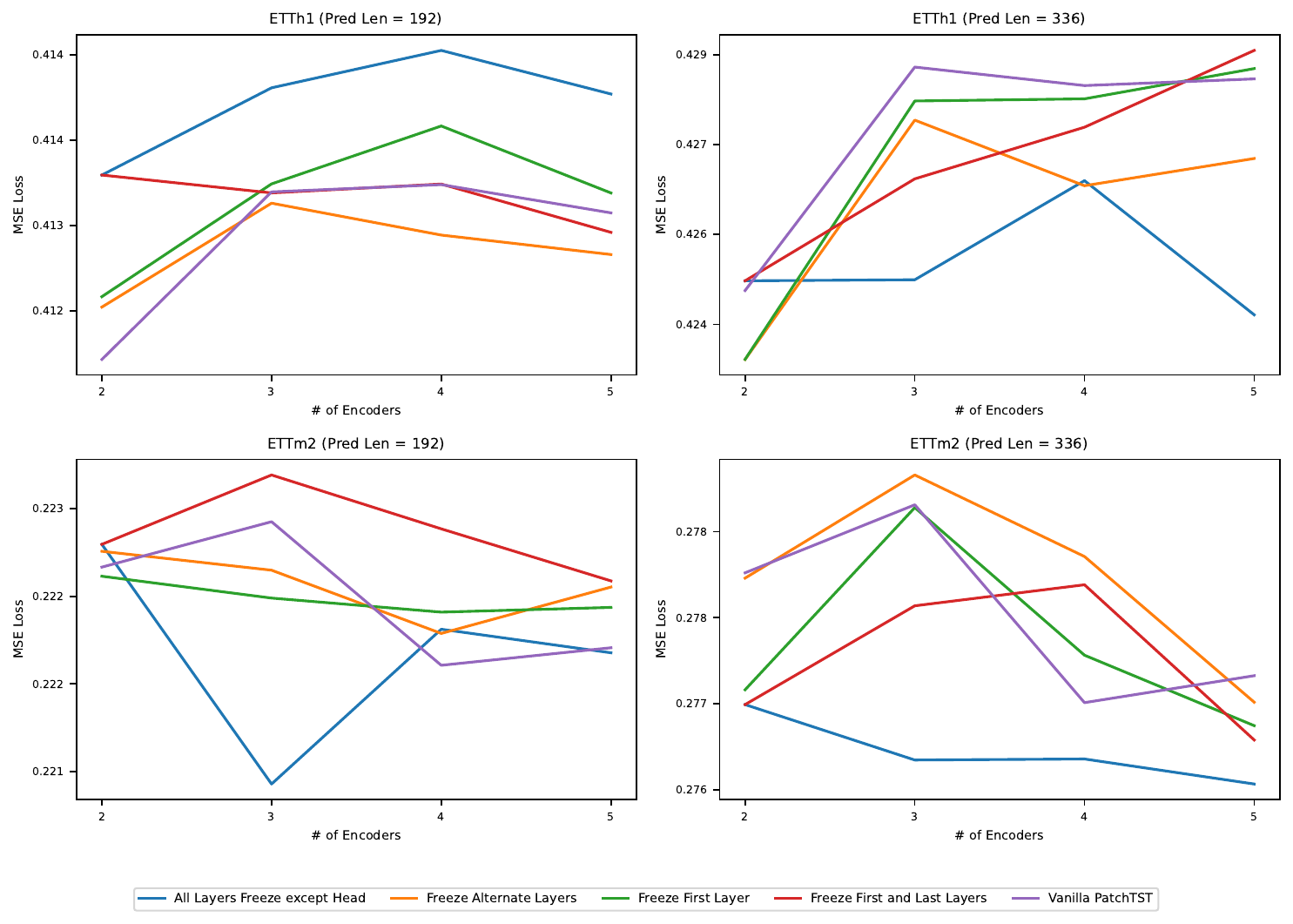}  \caption{MSE as a function of the number of encoder layers across different prediction lengths and freezing schemes.  Panels:  Top‑left: ETTh1 (\(H = 192\)); top‑right: ETTh1 (\(H = 336\)); bottom‑left: ETTm2 (\(H = 192\)); bottom‑right: ETTm2 (\(H = 336\)).} \label{fig:combined_mse_plots_196_332} \end{figure*}

 We performed ablation over the number of encoder layers in FreezeTST for $H = 96$ on ETTh1 dataset. Fig. \ref{fig:mse_10_enc_etth1} shows how the MSE loss varies with the number of encoders. It suggests that the optimal number of encoders is $4$-$6$, beyond which the MSE loss increases as a consequence of overfitting.

\section{Model and Implementation Details}

\subsection{FreezeTST Encoder Configuration}
 Each multivariate window of length \(T\) is split channel-wise into non-overlapping patches of length \(p\) and stride \(s\).  Every patch is linearly embedded  and enriched with a sinusoidal positional code; channel IDs are injected by a learned \(d_{\mathrm{model}}\)-dimensional type embedding.  
 Let \(L\) be the encoder depth.  Layers with indices in the set  
\begin{equation}
  \mathcal I_{\mathrm f} = 
  \bigl\{\;\ell\;:\;\ell\bmod 2 = 1,\ 1\le \ell\le L \bigr\}
\end{equation}
are frozen immediately after Xavier initialisation and rescaled to spectral norm 1; the complement \(\mathcal I_{\mathrm{tr}}\) remains trainable under the regularisation of Prop. 1 (in the paper). 

\subsection{Echo‑State Reservoir Initialisation}
 Draw \(W_{\mathrm{res}}\sim\mathcal N(0,1)\) i.i.d., compute its largest singular value \(\sigma_{\max}\), and set  
\(W_{\mathrm{res}}\leftarrow (\alpha/\sigma_{\max})\,W_{\mathrm{res}}\)  
so that \(\|W_{\mathrm{res}}\|_{2}=\alpha<1\).  Unless stated otherwise we fix \(\lambda=0.2\); for distribution-shift experiments \(\lambda\) is annealed linearly from \(0.3\) to \(0.1\) over the first 30\% of updates, matching the horizon as per Prop. 2 (in the paper).  \(W_{\mathrm{in}}\) is initialised from \(\mathcal N(0,\,\sigma^{2})\) with \(\sigma=1/\sqrt{d_{\mathrm{model}}}\).

\subsection{Optimiser, Learning‑Rate Schedule, Clipping}
All models are trained with AdamW (\(\beta_{1}=0.9,\beta_{2}=0.98\), weight-decay \(10^{-2}\)).  The learning rate is warmed up linearly for 2\% of the total steps to \(\!10^{-4}\) and decayed with a cosine schedule.  Gradients are clipped to a global norm of 1.0; frozen layers are excluded from the optimiser states, so their tensors incur no memory overhead.

\subsection{Hyper‑parameter Search Spaces}
 Key ranges:  
\(\alpha\in\{0.7,0.8,0.9,0.95\}\),  
\(\lambda\in\{0.05, 0.1, \ldots , 1\}\),  
patch length \(p\in\{8,12,16,24\}\),  
depth \(L\in\{2,3,4,5,6,7,8,9,10\}\).  
Bayesian optimisation over the validation split selects a single configuration per dataset.

\subsection{Dataset Processing and Splits}

 The ETT datasets (ETTh1/2, ETTm1/2) consist of electrical transformer oil temperature data with 2 years of readings (July 2016 -- July 2018) at hourly (H) or 15-minute (M) intervals, each with 7 features (load, oil temperature, etc.). Electricity contains hourly electricity consumption of 321 customers (2012--2014). Weather includes 21 meteorological indicators recorded every 10 minutes in 2020 (Wind speed, temperature, humidity, etc.) in Germany. All inputs are z-scored per channel.  We follow the official 60/20/20 (ETT) or 70/10/20 (Weather, Electricity, ILI) chronological splits.

\subsection{Reproducibility Checklist}
Software stack: Ubuntu 22.04 LTS, CUDA 12.2, cuDNN 8.9, PyTorch 1.11.0 + CUDA, NumPy 1.26.4, SciPy 1.15.2 and scikit-learn 1.6.1 for Bayesian search.  
Reported runs over ETT , weather and ILI datasets use the triple  
\(\texttt{seed}\in\{2021,2022,2023\}\), while those over electricity dataset use single seed \(2021\).  
Seeds are set for Python’s \texttt{random}, NumPy, and PyTorch (CPU + CUDA) prior to data loading; \texttt{torch.backends.cudnn.deterministic=True} and \texttt{benchmark=False} ensure bit-wise repeatability.

\subsection{Ethics, Broader Impact and Limitations}
 
FreezeTST can automate long-horizon forecasts for power grids, finance, or epidemiology.  Deployed naively, erroneous forecasts might trigger harmful trading decisions or load-balancing failures.  Users should integrate domain constraints and human oversight before acting on model outputs.
The reservoir’s memory horizon is fixed once \((\alpha,\lambda)\) are chosen; abrupt regime shifts (e.g., lockdown-induced demand collapses) may fall outside that horizon, degrading accuracy.  Retraining or adaptive leak schedules are required for such non-stationary shocks. 
Because frozen blocks need no gradient, they can process encrypted or differentially-private activations without leaking training data—a promising direction for privacy-preserving forecasting.  Future work will couple FreezeTST with secure aggregation and on-device fine-tuning to enable safe, federated deployment.

\end{document}